\documentclass[10pt,twocolumn]{article}

\usepackage[preprint]{antintlpaper}
\usepackage{latexsym}
\usepackage{multirow}
\usepackage{fancyvrb}
\usepackage{listings}
\usepackage{colortbl}
\usepackage{float}


\newtheorem{arcdefinition}{Definition}
\theoremstyle{plain}
\newtheorem{arctheorem}{Theorem}[section]

\lstdefinelanguage{json}{
    basicstyle=\ttfamily\small,
    string=[s]{"}{"},
    stringstyle=\color{blue},
    breaklines=true,
    keywordstyle=\color{red},
    commentstyle=\itshape\color{gray!50},
}

\DefineVerbatimEnvironment{dataformat}{Verbatim}{frame=single, breaklines=true, fontsize=\small}

\newcommand{\inter}{INTER\textsuperscript{3}}

\AntPaperType{Technical Report}
\AntTitle{ARC: Fair Relative Advantage Comparison in Open-Ended Real-World Interaction}
\AntRunningTitle{ARC: Fair Relative Advantage Comparison}
\AntAuthors{Yongqi Tong\AntEqualContributor \and
  Tan Li Hui Faith\AntEqualContributor \and
  Choy Zhen Wen Marcus\AntEqualContributor \and
  Zhou Jin \and Kewei Fu \and Jiang-Ming Yang \and Jianshe Li \and Xin Zhang}
\AntAffiliations{Ant International}
\AntContact{\texttt{\{tongyongqi.yq, faith.t, marcus.choy, xiaocao.zj, fukewei.fkw,
  jmyang, zhouran.ljs, evan.zx\}@ant-intl.com}}
\AntEqualContributions
\AntDate{\today}
\AntLinks{
  \textbf{Framework Code:}\enspace
  \href{https://github.com/ant-intl/asri}{%
    \raisebox{-0.30ex}{\includegraphics[height=1.75ex]{%
      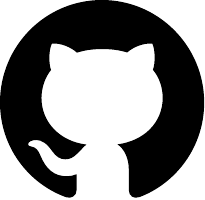}}\enspace ASRI}
  \qquad
  \textbf{Dataset:}\enspace
  \href{https://huggingface.co/datasets/ant-intl/ARC}{%
    \raisebox{-0.30ex}{\includegraphics[height=1.75ex]{%
      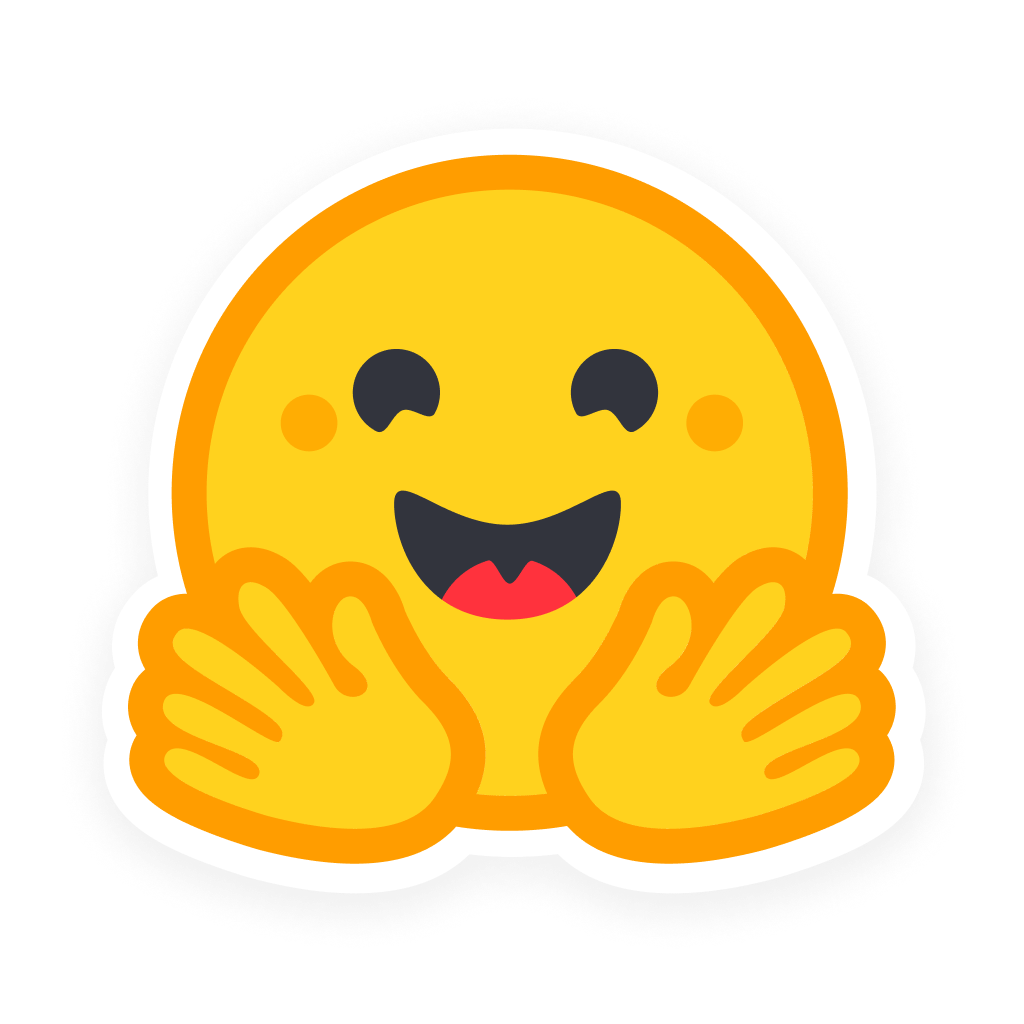}}\enspace ARC}
}
\AntKeywords{Reinforcement learning, AI agents, Tool Use, Interaction Strategies,  Advantage Estimation}
\AntAbstract{%
Open-ended real-world interaction admits multiple valid behaviors: an agent may answer directly, ask for clarification, provide progress updates, or confirm before acting. This flexibility breaks a core assumption behind group-based RL: rollouts compared within a group are no longer guaranteed to be behaviorally comparable. As a result, reward-model preferences over interaction style can distort relative advantages and steer optimization toward reward-preferred behaviors rather than context-appropriate ones. We formalize this as a \textit{reward fairness problem} and propose \textbf{ARC} (Advantage Regularization via Conditioning), a training recipe that restores fairer relative comparison through strategy-conditioned rollout grouping, together with hybrid rewards and entropy regularization. We study ARC in our proposed \inter, a novel paradigm for responsive, steerable, and execution-aware user-agent interaction that decouples user-visible communication from latent reasoning and tool use. \inter\ also provides the annotation and distillation pipeline for constructing \inter-86K, our strategy-annotated training corpus for supervised and RL training. Empirically, ARC substantially strengthens the core $\tau/\tau^2$ tool-use benchmarks, while \inter\ reduces time-to-first-token from 4.91s to 1.27s relative to a think-style baseline. Together, these results suggest that a central bottleneck in open-ended interactive learning is not only how agents are rewarded, but whether their behaviors are compared fairly in the first place. The ARC implementation and \inter-86K training data will be released.
}

\begin{document}
\makeanttitle

\section{Introduction}

\begin{figure*}
\centering
\IfFileExists{assets/v6.png}{%
  \includegraphics[width=1\linewidth]{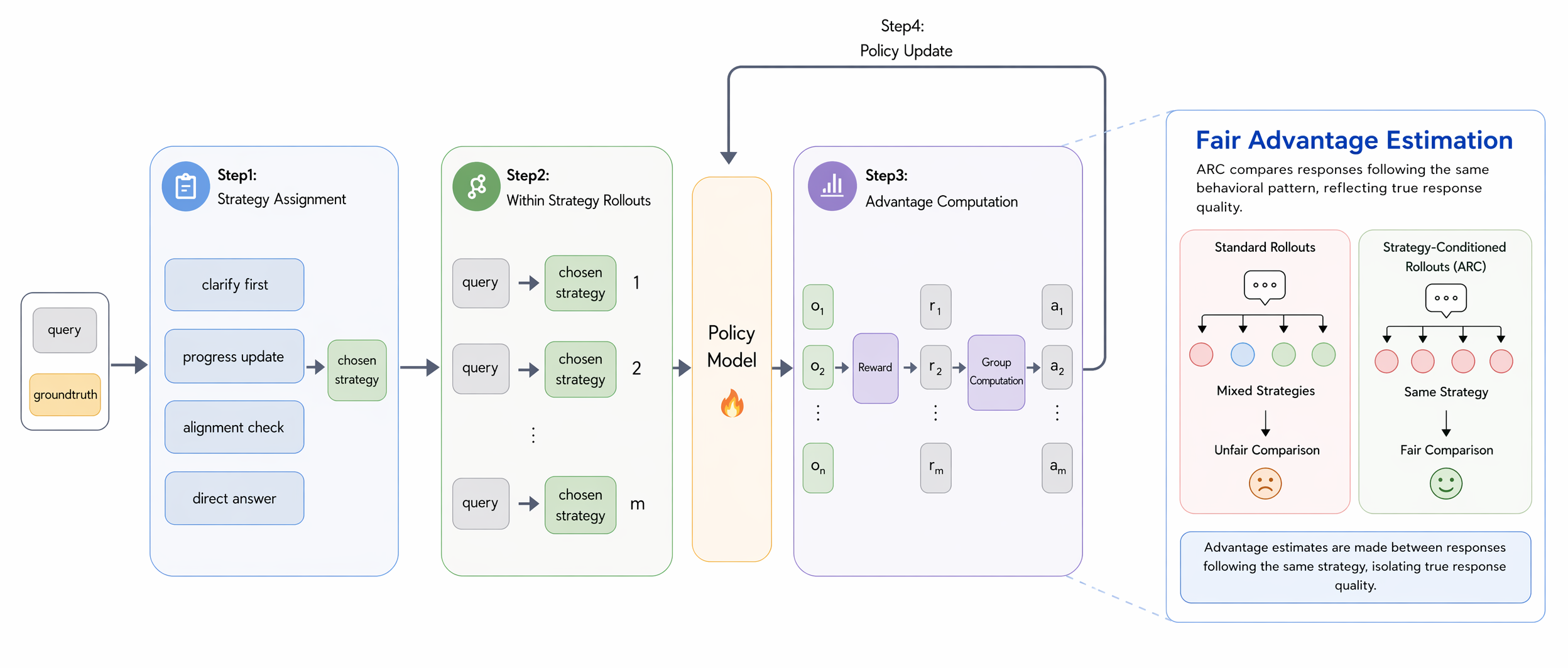}%
}{%
  \IfFileExists{assets/v5_og.png}{%
    \includegraphics[width=1\linewidth]{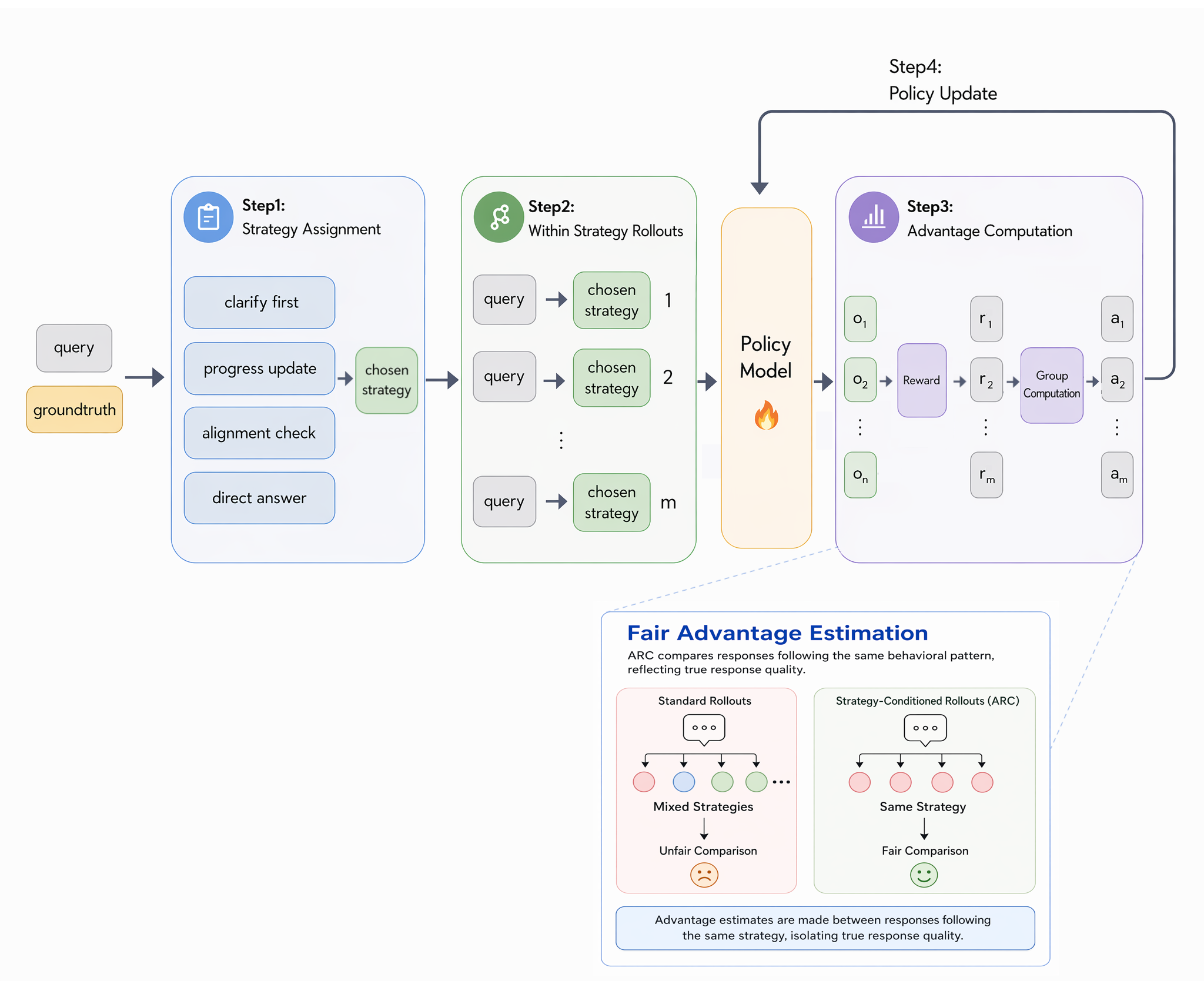}%
  }{%
    \includegraphics[width=1\linewidth]{assets/ARC_frameworkv3.png}%
  }%
}
\caption{Overview of ARC. ARC modifies group-based RL by pairing each example with a strategy instruction during training, so rollouts are compared only within strategy-conditioned groups, yielding cleaner relative advantages by eliminating cross-strategy contamination. At inference, the instruction is removed and the policy selects strategies autonomously, making ARC a general recipe for fairer advantage estimation in open-ended interaction.}
\label{fig:arc_diagram}
\end{figure*}

Group-based RL and RLHF methods learn from relative rewards within sampled groups and have become a standard post-training recipe for language models and agents~\cite{ppo2017,ahmadian2024rloo,deepseekmath2024,yu2025dapo,liu2025drgrpo}. Their signal is most interpretable when the compared rollouts occupy a locally comparable response region, so that centered rewards primarily reflect quality rather than qualitatively different behavioral modes. In practice, however, reward models are known to depend on spurious attributes such as response length and stylistic form, and RL policies can exploit these biases during optimization~\cite{odin2024,bnrm2026}. When a rollout group spans sharply different behaviors, these biases need not be uniform across the group: samples may fall in different regions of the reward model's preference landscape, so the centered reward can absorb both quality differences and region-specific reward preference. The comparison remains relative, but it is no longer fair.

This problem is especially acute in authentic real-world interaction, where many tasks are open-ended at the turn level and a single dialogue state may admit several appropriate next actions~\cite{yao2023react,tau2024,tau22025}. An agent may answer directly, ask for clarification, provide progress updates while tools run, or confirm before an irreversible action; which choice is best can depend on missing information, action reversibility, and the evolving user-agent state~\cite{tau2024,tau22025,du2024anytool,tong2026missing}. Unlike short-form reasoning domains with a single verifiable target, these interaction choices often do not have a unique standard answer even when they are all task-appropriate~\cite{tau2024,tau22025}. Comparing them inside one rollout group therefore entangles strategic diversity with reward-model preference. We formulate this confounding as the \textbf{reward fairness problem}: when multiple strategies are valid for the same prompt, cross-strategy comparison biases group-relative advantage estimation and can skew optimization toward reward-preferred interaction styles.

We address this problem with \textbf{ARC} (\textbf{A}dvantage \textbf{R}egularization via \textbf{C}onditioning). ARC assigns a training-time strategy instruction to each example, samples multiple rollouts within that strategy-conditioned comparison class, computes relative advantages only within the group, and updates the policy with our hybrid reward and entropy-regularized objective. Figure~\ref{fig:arc_diagram} gives a high-level overview. At inference, the strategy instruction is removed and the policy selects an interaction strategy autonomously. This differs from hint- or guidance-based RL methods, which use auxiliary information to reveal solution structure, privilege stronger traces, or improve exploration~\cite{sage2025,scafgrpo2024,luffy2025,exgrpo2025}. ARC uses auxiliary information for a different purpose: not to make the answer easier to find, but to compare rollouts fairly with one another.

However, existing agentic frameworks remain a weak substrate for studying this problem under authentic interaction, where users may interrupt, redirect, pause, or lose patience before execution completes, and where trajectories that achieve the same task outcome can still induce materially different interaction experiences~\cite{tau2024,tau22025,zhang2024humanai,liao2023transparency,zhang2024controllable}. Think-then-act agents typically postpone user-visible communication until the hidden reasoning-and-tool trajectory terminates, while ReAct agents intermingle reasoning, acting, and communication in a single visible trace~\cite{yao2023react,liang2025plantain,xie2025interleaved}. Both paradigms are optimized primarily for backbone capability and task success. They therefore primarily accumulate trajectories with verifiable outcomes, rather than open-ended interaction data in which multiple valid interleaving strategies may all solve the task yet induce different user experiences. We therefore build \textbf{\inter} (Interplay of Internal Reasoning, Tool Usage, and Interaction), a channel-separated framework whose system prompt explicitly defines interaction strategies aligned with the strategy families used by ARC. By separating \texttt{<answer>} spans from latent reasoning and tool execution, \inter\ turns progress updates, clarification, alignment checks, and mid-execution steering into first-class, controllable, and annotatable behaviors; this both broadens dialogue-style diversity and gives ARC cleaner rollout groups for fairer comparison. The same interface reduces TTFT from 4.91s to 1.27s relative to a think-then-act baseline. After deployment, we collect real online interaction traces in this format, augment them with curated public data, and synthesize large-scale diversified interaction trajectories aligned with ARC's strategy families, yielding \textbf{\inter-86K}, a strategy-annotated corpus of 86K examples and a realistic substrate for fair cross-strategy comparison.

Our theoretical analysis isolates \textbf{inter-strategy variance} as a source of estimator error in standard group-based RL and shows that ideal strategy conditioning removes this term from the centered-advantage variance decomposition. Our empirical results then show that, when trained on \inter-86K, ARC substantially improves open-ended agentic RL for tool-usage settings. Additional diagnostics make the mechanism more concrete: strategy hints matter during training, while omitting them at inference recovers the best overall behavior, indicating that ARC is not merely fitting to prompt-side instructions but learning a more general interaction policy. Together, \inter\ and ARC address both sides of the problem: \inter\ provides a realistic interaction framework in which diverse valid behaviors can be observed and collected, while ARC resolves how those behaviors should be compared during optimization. We view this combination as a practical step toward open-ended real-world agent training, where the central challenge is to learn from realistic interaction data while preserving fair relative comparison when multiple valid behaviors admit no single exact answer.

Our contributions are threefold:
\begin{itemize}
    \item We propose \textbf{ARC}, a strategy-conditioned group-RL recipe for mitigating unfair advantage comparison in open-ended real-world agent interaction, and we analyze how it reduces \textbf{inter-strategy variance} in group-based advantage estimation.
    \item We build \textbf{\inter}, an async streaming interaction framework that separates user-visible communication from latent reasoning and tool execution, supports explicit strategy control and user interruption, and makes open-ended interaction observable for training.
    \item We construct \textbf{\inter-86K}, a strategy-annotated training substrate from real online traces, curated public data, and synthetic trajectories, and show it enables ARC to improve open-ended agentic RL while generalizing best to hint-free inference.
\end{itemize}

\section{Related Work}

Tool-augmented agents have made rapid progress in reasoning, planning, and external action~\cite{schick2023toolformer,qin2023toolllm,yao2023react,du2024anytool,qu2024toolsurvey,tong2024weak}, while recent interleaving work has improved responsiveness by alternating between thought and partial output~\cite{liang2025plantain,xie2025interleaved}. Our departure is architectural: \inter\ separates visible interaction from latent reasoning and tool execution, so the user need not wait for the entire internal trajectory to finish before the system can communicate. This turns responsiveness from a decoding behavior into an interface property, and makes strategy diversity operational in a realistic agent loop.

On the RL side, our work builds on policy-gradient and RLHF estimators~\cite{ppo2017,ahmadian2024rloo,rafailov2023dpo,ethayarajh2024kto}, especially group-relative methods such as GRPO and its descendants~\cite{deepseekmath2024,yu2025dapo,liu2025drgrpo}. These methods are effective when samples within a group are meaningfully comparable. Our focus is a different failure mode: in open-ended agent interaction, valid rollouts may differ in strategy rather than only in quality, so cross-strategy comparison can inject reward-model preferences directly into relative advantage estimates. ARC targets this comparability problem at rollout construction time.

The closest methodological neighbors are guidance-augmented RL approaches such as SAGE, Scaf-GRPO, LUFFY, and ExGRPO~\cite{sage2025,scafgrpo2024,luffy2025,exgrpo2025}, which introduce auxiliary signals to improve exploration, mitigate sparse rewards, or reuse successful experience. ARC also uses an auxiliary instruction, but for a different reason: not to make the task easier, but to define a cleaner comparison class for group-relative optimization. Additional discussion of adjacent tool-use, interaction, and guidance literatures is deferred to Appendix~\ref{app:extended_related_work}.

\section{The \inter\ Setting}

\subsection{Interaction Interface}

ARC is evaluated in \inter, a channel-separated interaction setting for open-ended agents. The core design choice of \inter\ is to separate the \emph{interaction channel} from the \emph{execution channel}. Plain text outside \texttt{<answer>} is treated as latent reasoning, \texttt{<answer>} spans are streamed to the user, and tool calls remain structured internal actions whose results are returned to context. This makes user-visible communication available before the hidden reasoning-and-tool trajectory terminates, and it allows the same underlying task to be paired with different valid communication strategies. The mechanism is model-agnostic and requires only two implementation changes: adding \texttt{<answer>} tags to the tokenizer, and post-processing generated text to extract the user-visible spans. Figure~\ref{fig:infer3} illustrates the resulting execution pattern.

\begin{figure*}[t]
\centering
\includegraphics[width=0.9\linewidth]{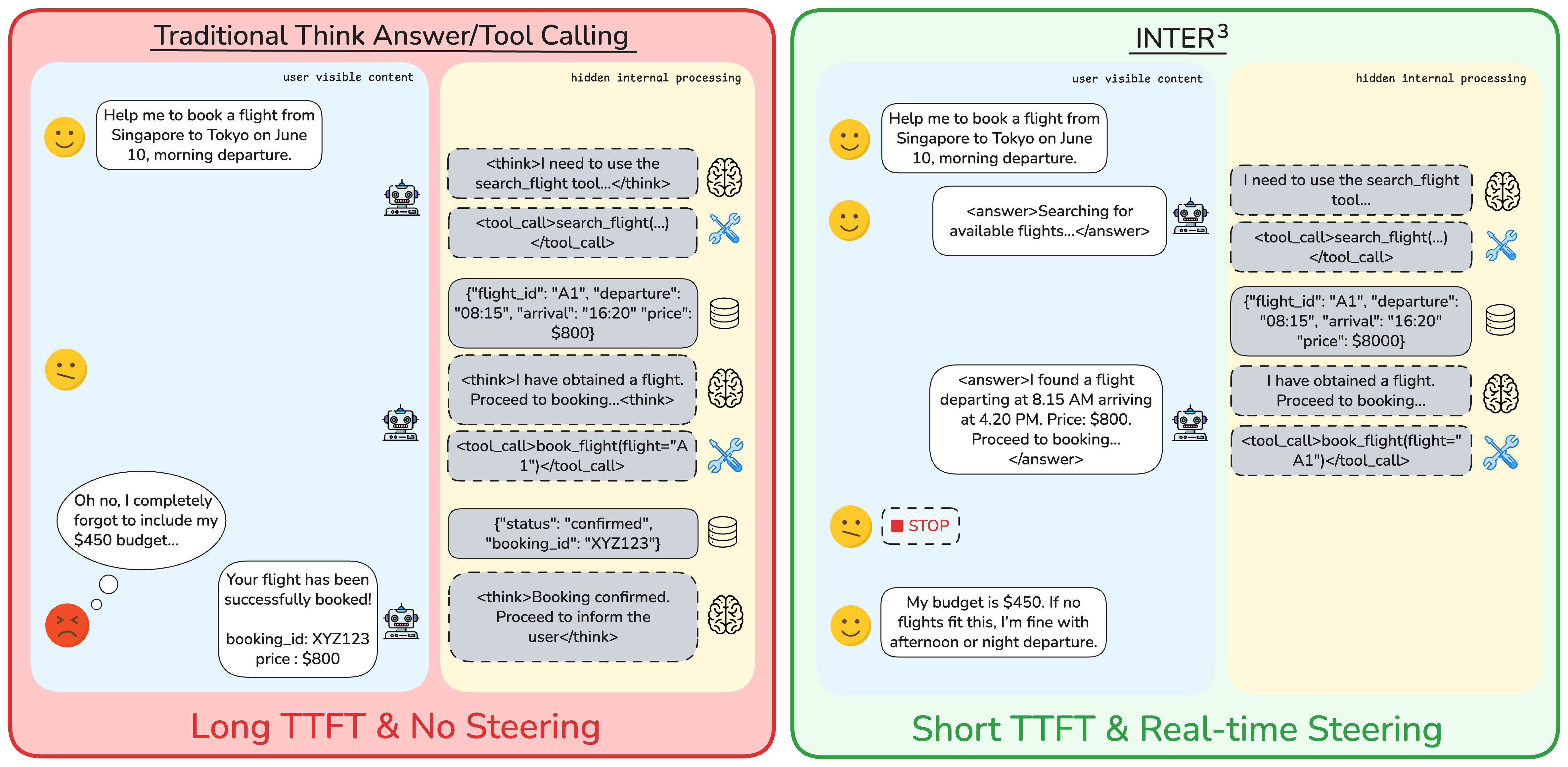}
\caption{Standard think-then-answer (left) completes reasoning and tool execution before answering, resulting in high TTFT and no steering, whereas \inter\ (right) interleaves reasoning, answering, and tool use, allowing real-time steering and faster TTFT.}
\label{fig:infer3}
\end{figure*}

\subsection{Runtime Design Rationale}


\inter\ is instantiated in a lightweight async streaming agent runtime because standard agent traces are a weak substrate for the interaction behaviors studied in this paper. Strict think-then-act pipelines delay user-visible communication until execution completes, while monolithic ReAct-style traces intermingle reasoning, acting, and communication in a single stream. Neither produces clean open-ended interaction data in which behaviors such as progress updates, clarification, alignment checks, and redirection are explicit and separately annotatable.

Because \texttt{<answer>} spans can be emitted before hidden execution completes, channel-separated interaction enables a substantially lower-latency operating regime. As shown in Figure~\ref{fig:ttft_tradeoff}, this reduces perceived latency while preserving continuous interaction during execution.

Our runtime preserves these behaviors as first-class events within a unified session. A single agent loop can continue hidden reasoning, emit user-visible \texttt{<answer>} spans, issue tool calls, and respond to user interruptions, making interaction strategy observable throughout execution rather than only in the final outcome. Two design choices are key for training. First, prompt-side strategy definitions are explicit and aligned with ARC conditioning families, so runtime control, annotation, and rollout grouping share consistent behavioral variables. Second, partial outputs, interruptions, tool results, and resumed continuations are normalized into a unified trace format, enabling post-deployment logs, curated public data, and synthetic trajectories to be converted into the same strategy-conditioned training substrate.
\begin{figure}
\centering
\includegraphics[width=0.6\linewidth]{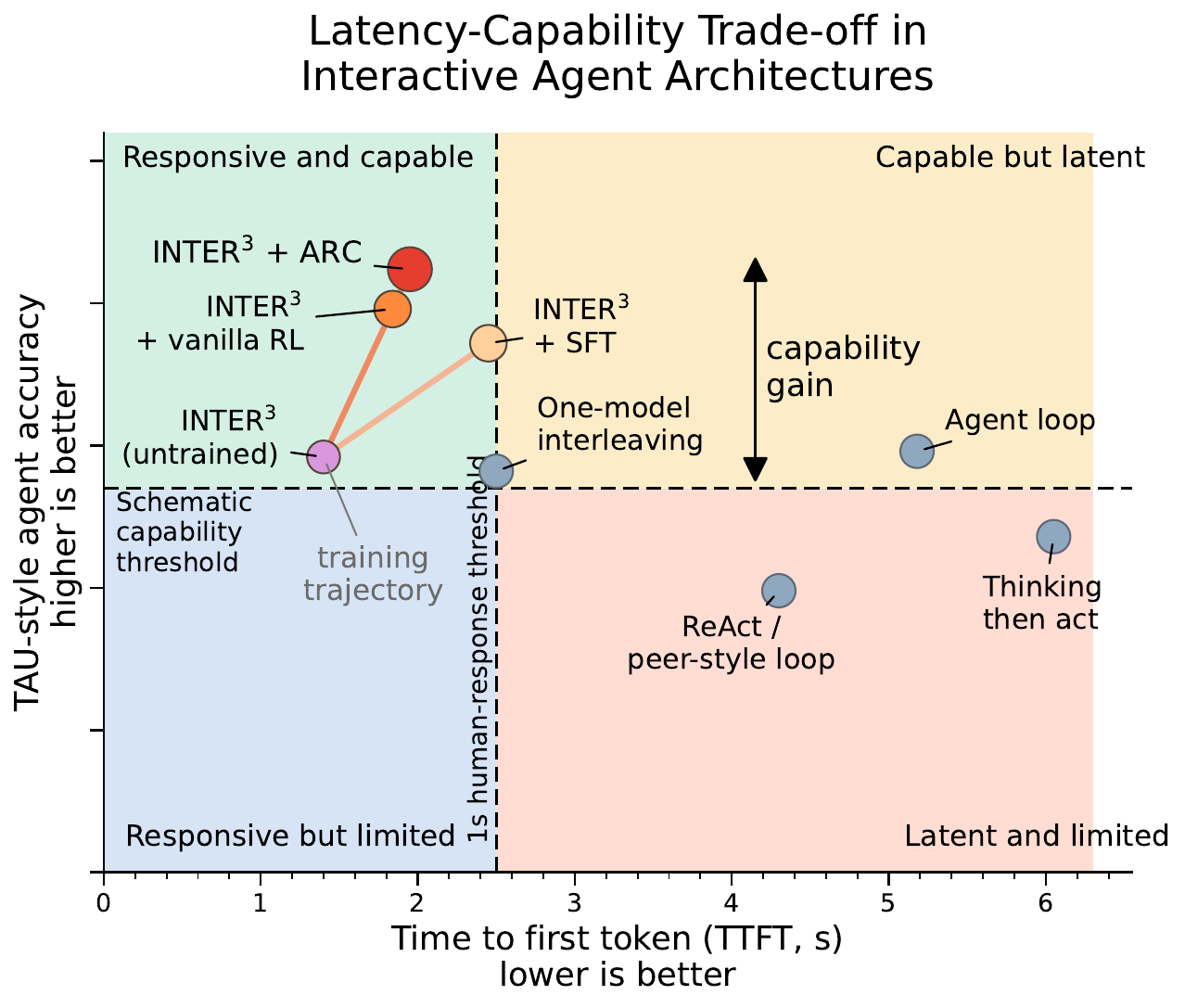}
\caption{Latency-capability trade-off for interactive agent architectures. Channel-separated interaction shifts the operating point into a substantially lower-latency regime, while ARC improves capability \emph{within} that regime rather than causing the horizontal TTFT shift itself. Non-\inter\ baselines are schematic architectural placements rather than exact benchmark claims.}
\label{fig:ttft_tradeoff}
\end{figure}


This separation makes interaction strategy explicit: the same task can be completed through different valid communication patterns while sharing the same execution substrate. That is precisely the regime in which cross-strategy comparison becomes measurable.

We organize these behaviors into four high-level strategy families: \textit{Progress Update}, \textit{Clarify First}, \textit{Alignment Check}, and \textit{Direct Answer}. ARC uses these families only as training-time comparison classes, not as inference-time requirements. We instantiate this setting with \textbf{\inter-86K}, a strategy-annotated corpus of 86.8K examples spanning tool use, multi-hop QA, and logical reasoning (57.9K SFT, 28.9K RL with strategy instructions). Since \inter\ decouples communication from computation, it reduces user-visible latency; we treat this as an architectural property rather than evidence for ARC. Full strategy definitions and dataset construction details are deferred to Section~\ref{sec:inter86k}.



\section{\inter-86K Construction}
\label{sec:inter86k}
This section presents the construction of \inter-86K, including strategy taxonomy, data sources, curation, strategy annotation, and dataset statistics.

\subsection{Strategy Taxonomy}

\inter\ organizes interaction behavior into four high-level families and nine concrete strategies as seen in Table~\ref{tab:strategies}. ARC conditions rollout groups on the high-level family during training; the finer-grained strategies are useful for analysis, and interface documentation.

\begin{table}[t]
\centering
\caption{\inter\ strategy taxonomy used for ARC conditioning, analysis, and data construction.}
\label{tab:strategies}
\tiny
\begin{tabularx}{\linewidth}{@{}llX@{}}
\toprule
\textbf{Category} & \textbf{Strategy} & \textbf{Description} \\
\midrule
\multirow[t]{5}{*}{Progress Update} & Tool Execute & Execute single or sequential tool calls; return the final answer upon completion \\
 & Parallel Tools & Execute independent tool calls simultaneously; synthesize combined results \\
 & Multi-Step Update & Stream incremental progress updates to the user between sequential sub-tasks \\
 & Silent Execution & Execute tool calls without emitting user-visible output when intermediate steps are irrelevant \\
 & Error Recovery & Handle tool failures gracefully and recover with corrective actions \\
\midrule
Clarify First & Clarify First & Ask the user for missing or ambiguous information before executing any tool call \\
\midrule
\multirow[t]{2}{*}{Alignment Check} & Alignment Check & Restate the user's intent for confirmation before executing an irreversible action \\
 & Decision Support & Present options and trade-offs for the user to make a judgment \\
\midrule
Direct Answer & Direct Answer & Answer immediately from internal knowledge without invoking any tools \\
\bottomrule
\end{tabularx}
\end{table}

\subsection{Data Sources and Curation}
\label{app:sources}

We construct \inter-86K from two sources. The first comes from real-world deployment in customer service on a large-scale global payment platform under the \inter\ runtime, where we collect online interaction traces exhibiting interruption, redirection, clarification, progress updates, and multi-step tool use. After de-identification and normalization into the \inter\ format, these traces provide realistic open-ended interaction patterns that are difficult to recover from standard offline benchmarks alone.

The second source combines curated public benchmarks with teacher-driven augmentation, distillation, and synthesis. Public tool-use and reasoning data are rewritten into the \inter\ format, expanded with strategy-conditioned variants, and supplemented with diversified synthetic interaction trajectories produced with a strong teacher model, Qwen3.5-397B-A17B~\cite{qwen3.5}. The resulting SFT split contains 57.9K examples and mixes tool-use, multi-hop QA, and logical reasoning: 34.2K tool-use examples (59.1\%) from the ToolMind collection, which aggregates public function-calling and agent benchmarks~\cite{du2024apigen,toolace2024,tau2024,xlam2024,glaive2024, when2call2025,buttoninstruct2024}; 17.2K multi-hop QA examples (29.7\%) from Musique Long Content~\cite{trivedi2022musique}; and 6.4K logical-reasoning or high-quality interleaved examples (11.1\%) from KnightsAndKnaves~\cite{xie2024memorization} and Opus Distilled~\cite{teichai2025opus}. The RL split contains 28.9K examples and is drawn entirely from tool-use data, to which we add strategy-conditioned prompts for ARC training. Figure~\ref{fig:rl_strategy_pie} summarizes the resulting domain and strategy distributions.

\subsection{SFT Data Construction}
\label{app:sft_construction}

The SFT set draws from four data families with complementary reasoning demands. The largest component ($\sim$34.2K, 59.1\%) is the ToolMind collection, which aggregates seven publicly available tool-use benchmarks spanning diverse agentic scenarios. Since these benchmarks include ground-truth chain-of-thought annotations, we use a 397B LLM (Qwen3.5-397B-A17B~\cite{qwen3.5}) to rewrite each assistant turn into the \inter\ format---interleaving internal reasoning with \texttt{<answer>} blocks and explicit tool calls---then filter malformed dialogues (invalid turn orderings, empty \texttt{<answer>} blocks). Three supplementary sources diversify the model's interleaved reasoning beyond tool calling: \textbf{KnightsAndKnaves} ($\sim$6.2K, 10.7\%) for multi-step deductive inference, \textbf{Musique Long Content} ($\sim$17.2K, 29.7\%) for multi-hop reasoning over extended passages, and \textbf{Opus Distilled} ($\sim$250, 0.4\%) for high-quality interleaved conversations generated via Claude Opus.

\subsection{RL Data Construction}
\label{app:rl_construction}

The RL set ($\sim$28.9K) reuses the seven ToolMind sub-datasets but introduces training-time strategy instructions essential for ARC training. Each conversation undergoes a two-phase process: (1)~a strategy-annotation pipeline assigns one \emph{plausible} strategy instruction from our taxonomy---\textbf{Progress Update}, \textbf{Direct Answer}, \textbf{Clarify First}, or \textbf{Alignment Check}---based on the query, context, and response; (2)~the selected instruction is injected into the system prompt, providing an explicit behavioral condition for \textit{RL+ARC} training. These prompt-side strategy definitions are aligned with the high-level strategy families used by ARC, so the interaction interface, data annotation pipeline, and rollout comparison classes share the same control variables. In contrast, \textit{baseline RL} training is performed on the same data without any injected strategy instruction.

\begin{figure}[t]
\centering
\includegraphics[width=0.85\linewidth]{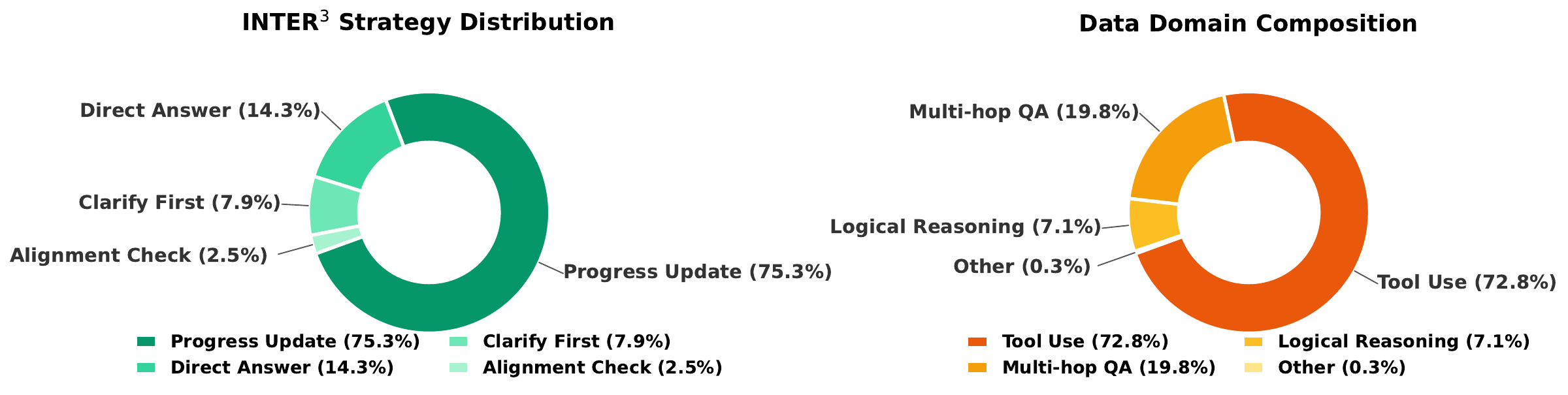}
\caption{Left: INTER\textsuperscript{3} strategy distribution in the RL training dataset. Progress Update dominates at 75.3\%, reflecting the prevalence of multi-step tool-use scenarios that benefit from incremental user updates. Right: data domain composition of the full training set (SFT + RL). Tool Use accounts for 72.8\% of examples, with Multi-hop QA (19.8\%) and Logical Reasoning (7.1\%) providing complementary reasoning diversity.}
\label{fig:rl_strategy_pie}
\end{figure}

\subsection{Strategy Annotation}
\label{app:annotation}

A core contribution of our dataset is the explicit strategy annotation for each example. Unlike prior work that treats agent behavior as monolithic, we recognize that effective interaction requires behavioral diversity adapted to context. We develop a rigorous annotation pipeline combining model-based labeling with collaborative verification.


We annotate examples on the four high-level strategies---\textbf{Progress Update}, \textbf{Clarify First}, \textbf{Alignment Check}, and \textbf{Direct Answer}.

\subsubsection{Collaborative Annotation Pipeline}

We employ a collaborative annotation approach using two large language models (Qwen3-235B-Instruct~\cite{qwen3} and Qwen3.5-27B~\cite{qwen3.5}) to ensure annotation quality and consistency. This dual-model verification captures the inherent subjectivity in strategy assignment while maintaining high inter-annotator agreement.

\textbf{Stage 1: Independent Annotation.} For each example, both models independently predict a plausible primary strategy given:
\begin{itemize}
    \item The user query
    \item The conversation history
    \item The reference assistant response
    \item The 4 high-level strategies taxonomy with detailed definitions and examples
\end{itemize}

Each model outputs: (1) a predicted strategy assignment, (2) a confidence score, and (3) a brief justification.

\textbf{Stage 2: Agreement and Disagreement Resolution.}
\begin{itemize}
    \item \textit{Agreement cases}: When both models predict the same strategy with confidence $>$0.85, we accept the annotation directly.
    \item \textit{High-confidence disagreement}: When models disagree but one has significantly higher confidence ($\Delta > 0.15$), we accept the higher-confidence prediction.
    \item \textit{Low-confidence or ambiguous disagreement}: We escalate to human review. Three trained annotators independently label the example. We determine the final annotation using a majority vote.
\end{itemize}

\subsection{Dataset Statistics}
\label{app:statistics}

Table~\ref{tab:final_stats} presents comprehensive statistics of the final dataset.

\begin{table}[ht]
\centering
\small
\caption{\textsc{Inter-86K} summary statistics. Token counts use whitespace tokenization.}
\label{tab:final_stats}
\begin{tabular}{lr}
\toprule
\textbf{Statistic} & \textbf{Value} \\
\midrule
Total examples & 86,796 \\
\midrule
\multicolumn{2}{l}{\textit{Token Statistics}} \\
Average input tokens & 702.1 \\
Average output tokens & 264.5 \\
Average total tokens & 966.6 \\
\midrule
\multicolumn{2}{l}{\textit{Tool Statistics}} \\
Examples with tool calls & 54,856~(63.2\%) \\
Average tools per example & 4.9 \\
Max tools in single example & 38 \\
\midrule
\multicolumn{2}{l}{\textit{Conversation Statistics}} \\
Single-turn conversations & 55,376~(63.8\%) \\
Multi-turn conversations & 31,420~(36.2\%) \\
Average turns (multi-turn) & 12.5 \\
\midrule
\multicolumn{2}{l}{\textit{\texttt{<answer>} Tag Statistics}} \\
Examples with \texttt{<answer>} tags & 86,780~(100.0\%) \\
Average \texttt{<answer>} segments & 3.48 \\
Average tokens in \texttt{<answer>} & 25.3 \\
\bottomrule
\end{tabular}
\end{table}

\section{Our Methodology: Advantage Regularization via Conditioning}
\label{sec:arc}

While the \inter\ setting makes interaction strategy observable, it does not solve the core RL challenge: how to compare responses fairly when they follow different communication patterns. We present \textbf{ARC} (\textbf{A}dvantage \textbf{R}egularization via \textbf{C}onditioning), a conditioning-based RL method for open-ended agent training. ARC's core mechanism is \emph{strategy-conditioned rollout grouping}, which changes the comparison unit in group-based RL: relative advantages are computed within a strategy-conditioned subspace rather than across heterogeneous behaviors.


\subsection{Problem: Unfair Advantages in Multi-Strategy RL}

In group-based RL, given prompt $x$, we sample $N$ responses $\{y_1, \ldots, y_N\}$ from $\pi_\theta(\cdot | x)$ and compute advantages $\hat{A}_i = r_i - \bar{r}$, assuming comparable samples. When responses follow different strategies, this breaks: reward model bias contaminates the advantage signal, skewing policy updates toward reward-preferred behaviors regardless of task appropriateness.

\begin{arcdefinition}[$\delta$-Reward Fairness]
A reward model $\mathcal{R}$ is \textit{$\delta$-fair} w.r.t.\ strategy set $\mathcal{S}$ if for any prompt $x$ and responses $y_i, y_j$ following different strategies with equal quality: $|\mathcal{R}(x, y_i) - \mathcal{R}(x, y_j)| \leq \delta$. A reward model is \textit{fair} if $\delta = 0$, and \textit{unfair} when $\delta > 0$ introduces systematic bias across strategies.
\end{arcdefinition}

In practice, reward models violate this fairness property due to exposure, length, and style bias---especially pronounced in agent settings where interaction appropriateness lacks clear ground truth.

\subsection{Strategy-Conditioned Advantage Estimation}

The following results analyze a stylized prompt-conditional reward decomposition of the form $r=\mu_s(x)+\epsilon$. They isolate one source of estimator variance in group-relative advantages and should not be read as end-to-end convergence guarantees for GRPO. This estimator-centric viewpoint is closer to classical variance-reduction analyses for policy gradients ~\cite{greensmith2004variance} and recent RLHF-side discussions of estimator behavior~\cite{ahmadian2024rloo,liu2025drgrpo} than to a full convergence theory. For clarity, the theoretical comparison uses equal group size across the conditioned and unconditioned cases.

\begin{arctheorem}[Idealized Variance Amplification]
\label{thm:variance_amplification}
Under standard group-based RL sampling, the advantage variance is $\mathrm{Var}[\hat{A}_i] = (\sigma^2_{\text{inter}} + \sigma^2_{\text{intra}})(1 - 1/N)$, where $\sigma^2_{\text{intra}}$ captures within-strategy noise and $\sigma^2_{\text{inter}}$ captures between-strategy variance from reward model bias.
\end{arctheorem}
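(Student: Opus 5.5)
The plan is to make the stylized model $r=\mu_s(x)+\epsilon$ precise and then do a direct second-moment computation for the centered reward $\hat{A}_i=r_i-\bar r$. Fix a prompt $x$. Each rollout $y_i$ draws a strategy $s_i\sim\pi_\theta(s\mid x)$ and then a reward $r_i=\mu_{s_i}(x)+\epsilon_i$. We assume the pairs $(s_i,\epsilon_i)$ are i.i.d.\ across $i=1,\dots,N$, with $\mathbb{E}[\epsilon_i\mid s_i]=0$ and $\mathrm{Var}[\epsilon_i\mid s_i]=\sigma^2_{\text{intra}}$. The homoscedastic choice keeps the statement clean; otherwise $\sigma^2_{\text{intra}}$ is read as $\mathbb{E}_s[\sigma_s^2]$. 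We also define $\sigma^2_{\text{inter}}=\mathrm{Var}_{s}[\mu_s(x)]$, the spread of strategy-level mean rewards. Under $\delta$-unfairness in the sense of the $\delta$-Reward Fairness definition, this spread contains the reward-model bias, since equal-quality strategies can differ in mean by up to $\delta$.

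The first step applies the law of total variance to a single reward:
\[
\mathrm{Var}[r_i]=\mathrm{Var}_s\bigl[\mathbb{E}[r_i\mid s_i]\bigr]+\mathbb{E}_s\bigl[\mathrm{Var}[r_i\mid s_i]\bigr]=\sigma^2_{\text{inter}}+\sigma^2_{\text{intra}}=:\sigma^2 .
\]
The cross term vanishes by $\mathbb{E}[\epsilon_i\mid s_i]=0$. The second step writes $\hat{A}_i=(1-1/N)r_i-\frac1N\sum_{j\neq i}r_j$. Because the $r_j$ are i.i.d., this gives
\[
\mathrm{Var}[\hat{A}_i]=\Bigl(1-\tfrac1N\Bigr)^2\sigma^2+\tfrac{N-1}{N^2}\sigma^2=\sigma^2\Bigl(1-\tfrac1N\Bigr).
\]
Substituting the decomposition of $\sigma^2$ yields exactly the claimed expression. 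This also sets up the later contrast: under strategy conditioning, all $s_i$ equal a fixed $s$, so the same computation gives $\sigma^2_{\text{intra}}(1-1/N)$ and the inter-strategy term disappears.

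The main obstacle is the modeling step, not the algebra. The theorem's phrasing treats $\sigma^2_{\text{inter}}$ and $\sigma^2_{\text{intra}}$ as given constants, so the proof has to state clearly the assumptions under which they are well defined and additive: i.i.d.\ sampling of strategies within the group, and conditional mean-zero noise. In particular, independence across rollouts is what kills the covariances between $r_i$ and $r_j$ for $j\neq i$. Without it, one would get extra covariance terms, and the formula would hold only as an idealization. The proof should flag that this independence is part of the idealization. I would also make explicit that the variance is unconditional over strategy draws, because conditioning on the realized strategy assignment would change the answer.
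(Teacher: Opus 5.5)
Your proof is correct and is essentially the paper's argument. It uses the same stylized model $r_i=\mu_{s_i}+\epsilon_i$ with i.i.d.\ rollouts and the same centering identity $\mathrm{Var}[Z_i-\bar Z]=\sigma^2(1-1/N)$; only the order of steps differs. You apply the law of total variance to a single reward and then center once, whereas the paper writes $\hat{A}_i=(\mu_{s_i}-\bar\mu)+(\epsilon_i-\bar\epsilon)$ and centers each piece separately, so your route needs only $\mathbb{E}[\epsilon_i\mid s_i]=0$ rather than the paper's full independence of noise and strategy.
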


The inter-strategy component $\sigma^2_{\text{inter}}$ \textit{persists} under standard sampling: no amount of oversampling can eliminate it. Our solution is to condition each rollout group on a specific strategy.

\begin{arcdefinition}[Strategy-Conditioned Sampling]
For each training example, we construct a strategy-conditioned prompt $p^*$ by attaching a strategy instruction $s^*$ from our interaction taxonomy, then sample $M$ responses from $\pi_\theta(\cdot | x, s^*)$.
\end{arcdefinition}

\begin{arctheorem}[Idealized Variance Reduction via Conditioning]
\label{thm:variance_reduction}
Under strategy-conditioned sampling with the same group size: $\mathrm{Var}[\hat{A}_i | s^*] = \sigma^2_{\text{intra}} \cdot (1 - 1/N)$. Under perfect conditioning, the between-strategy mean-shift term is absent from this centered-advantage variance decomposition.
\end{arctheorem}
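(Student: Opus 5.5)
The plan is to work inside the stylized decomposition $r_i=\mu_{s_i}(x)+\epsilon_i$ fixed above, and to reuse the variance computation behind Theorem~\ref{thm:variance_amplification}, now applied to the conditional law given $s^*$. Fix a prompt $x$ and a strategy instruction $s^*$. Perfect conditioning is the modelling assumption that every rollout drawn from $\pi_\theta(\cdot\mid x,s^*)$ follows strategy $s^*$, so that $s_i=s^*$ for all $i$. The noises $\epsilon_1,\dots,\epsilon_N$ are taken to be i.i.d.\ given $(x,s^*)$, with mean zero and variance $\sigma^2_{\text{intra}}$. The group size is $N$, the same as in the unconditioned case; the $M$ of the sampling definition is identified with $N$, as the statement stipulates.

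\textbf{Step 1 (the mean shift cancels).} Under perfect conditioning, $r_i=\mu_{s^*}(x)+\epsilon_i$ and $\bar r=\mu_{s^*}(x)+\bar\epsilon$. Hence
\[
\hat A_i=r_i-\bar r=\epsilon_i-\bar\epsilon .
\]
The strategy-dependent term $\mu_{s^*}(x)$, which carries the reward-model bias of $\delta$-Reward Fairness, drops out identically and not merely in expectation. This identity is exactly the assertion that the between-strategy mean-shift term is absent.

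\textbf{Step 2 (variance of the centered noise).} Write $\epsilon_i-\bar\epsilon=(1-1/N)\epsilon_i-\frac1N\sum_{j\ne i}\epsilon_j$. By independence,
\[
\mathrm{Var}[\hat A_i\mid s^*]=\sigma^2_{\text{intra}}\Bigl[(1-1/N)^2+(N-1)/N^2\Bigr]=\sigma^2_{\text{intra}}(1-1/N).
\]

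\textbf{Step 3 (comparison with the unconditioned case).} In Theorem~\ref{thm:variance_amplification} the strategies $s_i$ are i.i.d.\ draws, so $\mu_{s_i}(x)$ is itself a random variable with variance $\sigma^2_{\text{inter}}$. The same computation as in Step 2, applied to $\mu_{s_i}+\epsilon_i$, gives the extra $\sigma^2_{\text{inter}}(1-1/N)$ term there. Equivalently, the law of total variance over $s$ makes that additive term vanish once $s$ is degenerate at $s^*$.

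\textbf{Main obstacle.} The algebra is routine. The delicate step is stating the idealization precisely, because two things can go wrong. First, if conditioning is imperfect, a residual $\mathrm{Var}(\mu_{s}\mid s^*)$ survives. Second, $\sigma^2_{\text{intra}}$ might depend on $s^*$. I would therefore define $\sigma^2_{\text{intra}}$ as the within-strategy variance for the conditioned strategy, or assume homoscedasticity across strategies. I would also remark that the result is a statement about the centered-advantage estimator, not about policy-gradient convergence, in line with the caveat given before the theorems.
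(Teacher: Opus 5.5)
Your proposal is correct and follows the paper's own proof. Perfect compliance forces $\mu_{s_i}=\mu_{s^*}$, so $\hat A_i=\epsilon_i-\bar\epsilon$, and the i.i.d.\ centering identity from the proof of Theorem~\ref{thm:variance_amplification} then gives $\sigma^2_{\text{intra}}(1-1/N)$; your caveats (a residual $\mathrm{Var}(\mu_S\mid s^*)$ under imperfect compliance, and a possibly $s^*$-dependent $\sigma^2_{\text{intra}}$) match the paper's closing formula with $\sigma^2_{\text{intra}}(x,s^\star)$.
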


\textbf{Intuition.} Heuristically, strategy conditioning emphasizes within-strategy reward dependence $I(r; y \mid x, s)$ while suppressing the strategy-linked term $I(r; s \mid x)$ that can contaminate cross-strategy comparison. Unlike hint-based RL methods~\cite{sage2025,scafgrpo2024} that use hints to guide toward correct answers, we use strategy instructions to enforce cleaner within-strategy comparisons across diverse behaviors. See Appendix~\ref{app:proofs} for proofs and analysis.

Training-time strategy instructions do introduce a train-inference mismatch because they are removed at deployment. We study progressive instruction removal empirically in Section~\ref{sec:curriculum_analysis} as a further analysis rather than as part of ARC itself.

\subsection{Entropy Regularization for Multi-Channel Generation}
\label{sec:training_objective}

The ARC training objective combines strategy-conditioned policy gradients with an entropy bonus:
\begin{equation}
\mathcal{L}(\theta) = -\sum_{i=1}^{M} \hat{A}_i^{(s^*)} \cdot \log \pi_\theta(y_i^{(s^*)} | x, s^*) - \beta \cdot H\!\left(\pi_\theta(\cdot | x, s^*)\right).
\end{equation}
\textbf{Why entropy regularization is critical for \inter.} The interleaved output format requires balancing three channels---internal reasoning, user-facing \texttt{<answer>} tags, and tool calls. Without the entropy bonus ($\beta = 0$), we observe \textit{entropy collapse}~\cite{jin2025revisiting}: the policy converges to emitting redundant \texttt{<answer>} blocks that receive marginal format rewards but carry no meaningful content. The entropy bonus counteracts this collapse by maintaining stochasticity across all output channels, encouraging diverse exploration of valid interleaving patterns.

\subsection{Idealized Sample-Efficiency View}

\begin{arctheorem}[Idealized Gradient-Sample Scaling]
\label{thm:sample_complexity}
Under the same stylized assumptions, if gradient-estimation error scales with the second moment of the score-function estimator, then the rollout requirement to estimate $\nabla J(\theta)$ up to accuracy $\varepsilon$ scales as $n_{\text{std}} = O\!\left(\frac{(\sigma^2_{\text{intra}} + \sigma^2_{\text{inter}})\log(1/\delta)}{\varepsilon^2}\right)$ for standard GRPO and $n_{\text{ARC}} = O\!\left(\frac{\sigma^2_{\text{intra}}\log(1/\delta)}{\varepsilon^2}\right)$ for ARC.
\end{arctheorem}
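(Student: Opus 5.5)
The plan is to reduce the statement to a concentration bound for an average of i.i.d.\ per-rollout gradient estimates, and to feed the variance expressions of Theorems~\ref{thm:variance_amplification} and~\ref{thm:variance_reduction} into it.

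\textbf{Estimator and its second moment.} Write the group-based gradient estimate as $\hat g_n = \frac1n\sum_{i=1}^n \hat A_i \nabla_\theta \log \pi_\theta(y_i\mid x)$, with the conditioning on $s^*$ added in the ARC case. Under the stylized model $r=\mu_s(x)+\epsilon$, I assume the score vectors are bounded, $\|\nabla_\theta\log\pi_\theta\|\le G$. This is the hypothesis that ``error scales with the second moment.'' It gives
\[
\mathbb E\|\hat A_i\nabla_\theta\log\pi_\theta(y_i\mid\cdot)\|^2 \le G^2\,\mathrm{Var}[\hat A_i],
\]
up to the centering correction.

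\textbf{Plugging in the two variances.} I take the variances from the two earlier theorems. Theorem~\ref{thm:variance_amplification} gives $\mathrm{Var}[\hat A_i]=(\sigma^2_{\text{inter}}+\sigma^2_{\text{intra}})(1-1/N)$ for standard sampling. Theorem~\ref{thm:variance_reduction} gives $\sigma^2_{\text{intra}}(1-1/N)$ under ideal conditioning. The factor $(1-1/N)\le 1$ and $G^2$ are absorbed into the $O(\cdot)$ constant, so the per-sample second-moment proxy is $V_{\text{std}}=\sigma^2_{\text{intra}}+\sigma^2_{\text{inter}}$ for standard GRPO and $V_{\text{ARC}}=\sigma^2_{\text{intra}}$ for ARC.

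\textbf{Concentration step.} I apply a high-probability mean-estimation bound to get $\|\hat g_n-\nabla J\|\le \varepsilon$ with probability $1-\delta$. If the centered per-sample gradients are sub-Gaussian with variance proxy $V$, a vector Bernstein/Hoeffding inequality gives
\[
\Pr\big(\|\hat g_n-\nabla J\|>\varepsilon\big)\le 2d\exp\!\big(-c\,n\varepsilon^2/V\big).
\]
A dimension-free alternative is median-of-means, which gives $n=O(V\log(1/\delta)/\varepsilon^2)$ using only the second moment. I would prefer median-of-means, since it matches the stated assumption exactly and needs no tail conditions. Solving for $n$ then gives $n_{\text{std}}$ and $n_{\text{ARC}}$ as stated.

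\textbf{Main obstacle: independence.} The group-mean baseline makes the $\hat A_i$ within a group dependent, and it also biases $\hat g_n$ slightly. I would handle this by treating each group as a single i.i.d.\ block whose averaged estimator has second moment $O(V/N)$, and then applying median-of-means across groups. Alternatively, I would use a leave-one-out baseline, which makes the estimator unbiased and changes variances only by the factor $N/(N-1)$, a constant absorbed into $O(\cdot)$.

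I would also state explicitly that the comparison holds only within this idealized model, with perfect conditioning and the same $G$ in both cases.
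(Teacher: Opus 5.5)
Your argument establishes the literal $O(\cdot)$ claims, but by a different route from the paper's. The paper first proves an auxiliary Gradient Variance Bound (Theorem~\ref{thm:gradient_variance_app}). For the group estimator $g=\frac1N\sum_i\hat A_i\nabla_i$ it assumes the advantage--score mixed moments factorize. It then uses both $\mathbb{E}[\hat A_i^2]=\sigma_r^2(1-1/N)$ and the cross term $\mathbb{E}[\hat A_i\hat A_j]=-\sigma_r^2/N$ to get the closed form $\mathbb{E}\|g\|^2=\sigma_r^2(N-1)(F-F')/N^2$. With $F,F'$ assumed comparable across schemes, this gives the ratio $\sigma^2_{\text{intra}}/(\sigma^2_{\text{intra}}+\sigma^2_{\text{inter}})$. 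Finally it posits sub-Gaussian gradient noise to turn $\sigma_g^2$ into $O(\sigma_g^2\log(1/\delta)/\varepsilon^2)$. You replace the factorized identity by the one-sided bound $\|\hat A_i\nabla_i\|\le G|\hat A_i|$, and you get $\log(1/\delta)$ from median-of-means using only second moments. This buys rigor: there is no moment-factorization approximation and no tail assumption, and a single uniform $G$ for $\pi_\theta(\cdot\mid x)$ and $\pi_\theta(\cdot\mid x,s^\star)$ replaces the ``comparable $F,F'$'' approximation. It costs sharpness. You get only upper bounds on $n_{\text{std}}$ and $n_{\text{ARC}}$, which cannot deliver Corollary~1's ratio $1+\sigma^2_{\text{inter}}/\sigma^2_{\text{intra}}$; that needs the two-sided approximate identity the paper derives. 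Also, median-of-means is a robust estimator rather than the plain group average GRPO actually uses, so for GRPO's own estimator you would still need a tail condition or a Bernstein bound exploiting the paper's bounded rewards. One small correction: taking $V=\sigma_r^2$, your per-rollout bound controls each of the $N(N-1)$ cross terms $\mathbb{E}[\hat A_i\hat A_j\langle\nabla_i,\nabla_j\rangle]$ only at order $G^2\sigma_r^2$, so by itself it gives $O(V)$, not $O(V/N)$, for a group. The $O(V/N)$ claim does hold for the covariance trace once you write $g=\frac1N\sum_i(r_i-\mathbb{E}r)\nabla_i-(\bar r-\mathbb{E}r)\frac1N\sum_i\nabla_i$, since each piece has covariance trace at most $G^2\sigma_r^2/N$. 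In any case $N$ is fixed and equal across the two schemes, so $O(V)$ per group already yields the stated scaling.
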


\textbf{Corollary 1.} Under the same assumptions, the implied sample-efficiency ratio is $n_{\text{std}} / n_{\text{ARC}} = 1 + \sigma^2_{\text{inter}} / \sigma^2_{\text{intra}}$. This is best read as an idealized scaling comparison rather than a full convergence guarantee, in the same spirit as estimator-level policy-gradient analyses~\cite{greensmith2004variance,yuan2022finite}. We use it as an estimator-level interpretation of why cleaner within-strategy comparison can reduce the gradient-sampling burden, rather than as a calibrated empirical speedup claim. See Appendix~\ref{app:proofs} for the derivation and assumptions.

\subsection{Implementation}

ARC consists of four steps (Figure~\ref{fig:arc_diagram}):

\begin{enumerate}
    \item \textbf{Strategy Instruction Assignment}: each training example is paired with one strategy instruction $s^*$ from the interaction taxonomy.
    \item \textbf{Within-Strategy Sampling}: sample $M$ rollouts from $\pi_\theta(\cdot \mid x, s^*)$.
    \item \textbf{Advantage Computation}: compute advantages within each strategy group.
    \item \textbf{Policy Update}: update the policy using the entropy-regularized objective.
\end{enumerate}

At inference, no strategy instruction is provided; the model autonomously selects appropriate strategies. Section~\ref{sec:curriculum_analysis} further analyzes instruction-removal curricula as an auxiliary experiment.

\section{Experiments}

\subsection{Experimental Setup}

\textbf{Model and training.} We use Qwen3-8B~\cite{qwen3} trained with GRPO~\cite{deepseekmath2024} in no-think mode, warm-started from the same \inter\ SFT checkpoint for all RL backbones. ARC's core mechanism is strategy-conditioned rollout grouping; in the full \inter\ instantiation additionally apply entropy regularization as a stabilizer for multi-channel generation.
Unless explicitly varied, all methods use the same final reward construction, selected based on ablations in Appendix~\ref{app:reward_ablation}. Additional ablations on entropy regularization are provided in Appendix~\ref{app:arc_ablation}.



\textbf{Benchmarks.} We evaluate on two dimensions: (1)~\textit{In-domain agentic capabilities} using tau-bench~\cite{tau2024} and tau2-bench~\cite{tau22025} for multi-turn tool calling in airline, retail, and telecom scenarios; (2)~\textit{Out-of-domain reasoning} using Arena-Hard~\cite{arenahard}, AIME 2026~\cite{matharena2025}, GPQA-Diamond~\cite{gpqa2024}, IFBench~\cite{ifbench2025}, and HMMT~\cite{matharena2025}.

\textbf{Baseline methods.} We benchmark ARC against two classes of baselines: (1)~\textit{Qwen3-8B-noThink / Think}, minimal reasoning variants to quantify ARC’s gains; (2)~\textit{Standard RL backbones} (PPO~\cite{ppo2017}, DAPO~\cite{yu2025dapo}, GRPO~\cite{deepseekmath2024}), representing widely-used reinforcement learning methods.

\subsection{Main Results}
\label{sec:main_results}

\begin{figure}[t]
\centering
\includegraphics[width=1\linewidth]{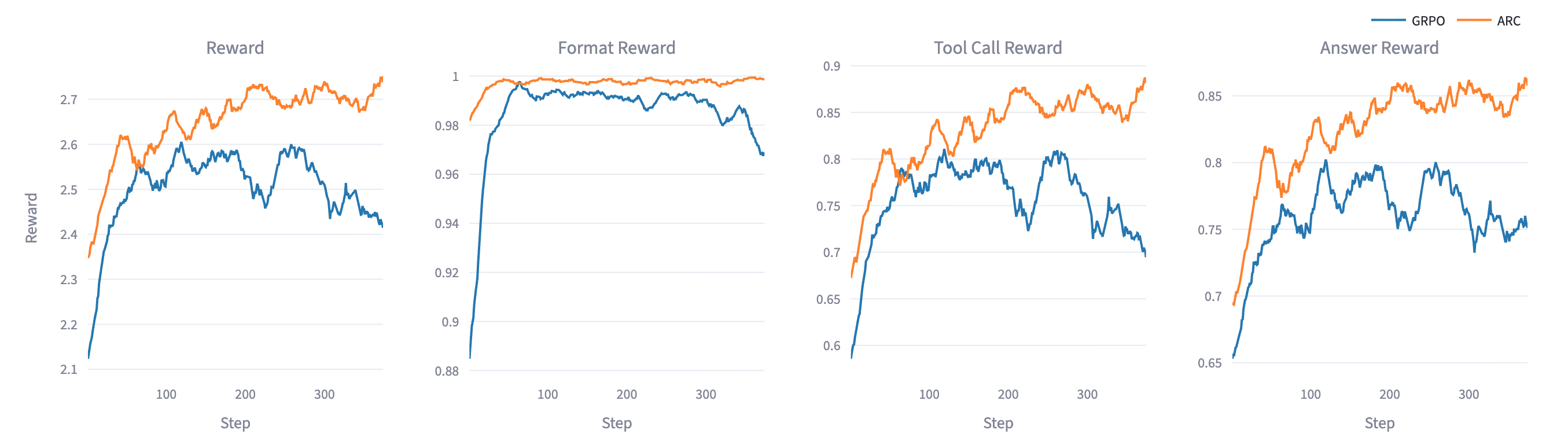}
\caption{Training reward curves for GRPO and ARC.}
\label{fig:reward_curves}
\end{figure}

\begin{table*}[t]
\centering
\caption{Main results. ARC delivers clearest gains on in-domain tool use, especially on GRPO. Avg. is the mean over displayed metrics; red parentheses show Avg. change vs. backbone. TTFT ($\downarrow$) averages available $\tau$-bench latency measurements.}
\label{tab:main_results}
\scalebox{0.65}{
\begin{tabular}{lcccccccccccc}
\toprule
\multirow{2}{*}{\textbf{Method}} & \multirow{2}{*}{\textbf{Avg.}} & \multirow{2}{*}{\textbf{TTFT $\downarrow$}} &
  \multicolumn{5}{c}{\textbf{Tool Calling}} &
  \multicolumn{3}{c}{\textbf{Reasoning}} &
  \textbf{\begin{tabular}[c]{@{}c@{}}Instruction\\Following\end{tabular}} &
  \textbf{Alignment} \\
\cmidrule(lr){4-8}\cmidrule(lr){9-11}\cmidrule(lr){12-12}\cmidrule(lr){13-13}
 & & & \textbf{$\tau$-airline} & \textbf{$\tau$-retail} & \textbf{$\tau^2$-airline} & \textbf{$\tau^2$-retail} & \textbf{$\tau^2$-telecom} &
  \textbf{\begin{tabular}[c]{@{}c@{}}AIME\\2026\end{tabular}} &
  \textbf{\begin{tabular}[c]{@{}c@{}}GPQA-D\end{tabular}} &
  \textbf{\begin{tabular}[c]{@{}c@{}}HMMT\\2025\end{tabular}} &
  \textbf{IFBench} & \textbf{ArenaHard} \\
\midrule
\multicolumn{13}{l}{\textit{Baselines}} \\
\quad Qwen3-8B-noThink & 22.85 & 0.05s & 12.00 & 29.86 & 14.61 & 36.55 & 17.80 & 17.08 & 45.71 & 12.92 & 24.83 & 17.14 \\
\quad Qwen3-8B-Think   & 32.82 & 4.91s & 28.00 & 36.81  & 29.75 & 38.71 & 23.46 & 47.92 & 53.54 & 20.83 & 19.43 & 29.83 \\
\midrule
\multicolumn{13}{l}{\textit{RL Backbones}} \\
\quad PPO              & 27.49 & 0.45s & 35.33 & 41.45 & 33.33 & 38.89 & 19.01 & 30.83 & 38.70 & 8.75 & 19.73 & 8.88 \\
\quad PPO + ARC (Ours) & 28.57 {\scriptsize\textcolor{red}{(+1.08)}} & 0.78s & 39.33 & 46.09 & 41.61 & 44.44 & 20.76 & 19.17 & 41.10 & 10.00 & 14.29 & 8.92 \\
\quad DAPO             & 28.61 & 0.62s & 31.33 & 42.32 & 35.33 & 42.11 & 19.37 & 36.67 & 39.02 & 13.33 & 12.24 & 14.36 \\
\quad DAPO + ARC (Ours)& 29.92 {\scriptsize\textcolor{red}{(+1.31)}} & 0.82s & 34.67 & 41.74 & 34.50 & 40.64 & 19.64 & 41.25 & 43.94 & 12.50 & 17.35 & 12.92 \\
\quad GRPO             & 28.09 & 0.61s & 31.33 & 40.29 & 36.67 & 40.64 & 17.84 & 31.67 & 40.91 & 9.17 & 18.71 & 13.66 \\
\quad GRPO + ARC (Ours)& 33.46 {\scriptsize\textcolor{red}{(+5.37)}} & 1.27s & 44.00 & 50.00 & 48.00 & 45.61 & 21.05 & 40.83 & 41.41 & 12.50 & 15.99 & 15.18 \\
\bottomrule
\end{tabular}
}
\end{table*}

\textbf{In-domain performance.} Table~\ref{tab:main_results} shows that ARC consistently improves performance across all RL backbones, with the largest gains for GRPO (+5.37 average points). Improvements are especially pronounced in in-domain tool-calling tasks: for GRPO, $\tau$-airline rises from 31.33 to 44.00, $\tau$-retail from 40.29 to 50.00, and $\tau^2$-airline from 36.67 to 48.00, with smaller gains in $\tau^2$-retail and $\tau^2$-telecom. These results demonstrate ARC’s ability to enhance multi-turn tool use and in-domain capabilities, particularly with stronger RL backbones.

\textbf{Out-of-domain reasoning and tradeoffs.} On out-of-domain reasoning, instruction following, and alignment benchmarks, ARC shows more nuanced effects. GRPO + ARC improves reasoning on AIME 2026 from 31.67 to 40.83 and provides modest gains in instruction-following and alignment metrics, while improvements for PPO and DAPO backbones are mixed, suggesting that ARC's benefits depend on the underlying optimization dynamics. These results show that while ARC's primary impact is on structured tool use, it can also unlock meaningful out-of-domain reasoning gains under suitable training regimes.

\textbf{Training dynamics and robustness.} Figure~\ref{fig:reward_curves} shows that ARC's benefits extend beyond final performance to training dynamics. While GRPO peaks mid-training and then declines—especially in tool-call and answer rewards—ARC maintains or improves across the same budget. Since format reward saturates for both methods, this divergence reflects execution and answer quality rather than syntactic compliance. This suggests that standard group-based RL initially learns useful behaviors but then exploits reward model biases, whereas ARC's strategy-conditioned comparison prevents degradation by preserving fair advantage signals throughout training.

\subsection{Small-Model Case Study}


\begin{table}[t]
\centering
\caption{Qwen3-4B case study. ARC remains effective at smaller scale and outperforms no-think, think, and 4B GRPO baselines on the in-domain tool-use suite.}
\label{tab:small_model_case}
\resizebox{\linewidth}{!}{
\begin{tabular}{lcccccc}
\toprule
\textbf{Method} & \textbf{Avg.} & \textbf{$\tau$ airline} & \textbf{$\tau$ retail} & \textbf{$\tau^2$ airline} & \textbf{$\tau^2$ retail} & \textbf{$\tau^2$ telecom}  \\
\midrule
Qwen3-4B-noThink  & 22.38 & 20.00 & 22.32 & 21.40 & 26.31 & 21.90 \\
Qwen3-4B-Think  & 29.54 & 27.51 & 37.10 & 28.07 & 29.21 & 25.83 \\
GRPO (4B)  & 22.33 & 29.33 & 10.72 & 36.94 & 15.96 & 18.69 \\
GRPO+ARC (4B) & 34.23 & 29.33 & 40.00 & 32.00 & 39.77 & 30.03  \\
\bottomrule
\end{tabular}
}
\end{table}

Table~\ref{tab:small_model_case} shows that the agentic benefit of ARC is not specific to the 8B scale. Even at 4B, ARC remains stronger than no-think, think, and the corresponding 4B GRPO baseline on the tool-use suite, increasing the five-task average by about 53\% over 4B no-think. This finding suggests that ARC improves performance by strengthening the training comparison signal in agentic settings, rather than simply leveraging larger model capacity.

\subsection{Curriculum Learning over Training-Time Strategy Instructions}
\label{sec:curriculum_analysis}

ARC uses strategy instructions during training but removes them at inference, motivating a curriculum-learning question: should the conditioning signal be gradually weakened so the policy learns more autonomous strategy selection without losing ARC's variance-control benefit. Inspired by curriculum-learning methods in RL~\cite{bengio2009curriculum,florensa2017reverse,florensa2018goal,matiisen2017tscl}, we study this as a diagnostic analysis of the trade-off between train-inference alignment and within-strategy comparability.

We compare three settings: (1) \textit{No removal}, the default ARC setup that always retains the strategy instruction; (2) \textit{Linear removal}, where the instruction-drop probability increases over training according to Appendix~\ref{app:proofs}, $p_d(t)=p_{\min}+(p_{\max}-p_{\min})\frac{t}{T}$; and (3) \textit{Constant removal}, which drops instructions with fixed probability $p_d=0.20$ throughout training.

\begin{figure}[t]
\centering
\includegraphics[width=0.9\linewidth]{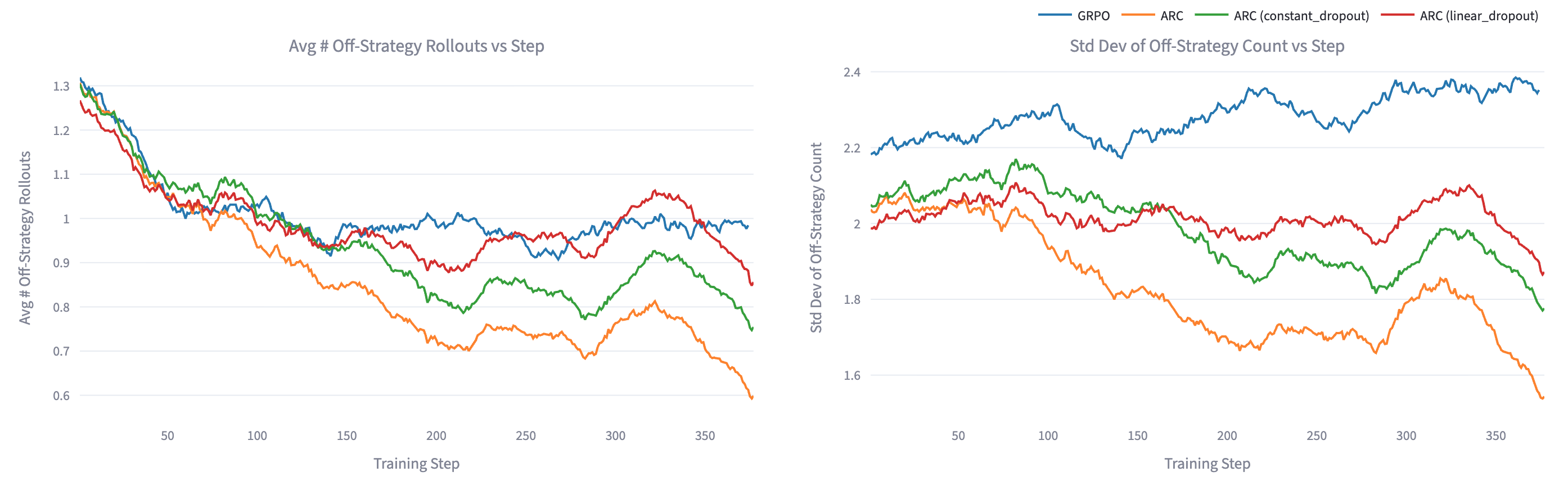}
\caption{Training-time instruction removal underperforms the default ARC setup. Left: average number of off-strategy rollouts per group, where an rollout has a predicted strategy different from the injected strategy. Right: standard deviation of off-strategy counts across groups. Lower values are better in both panels, indicating more consistent within-strategy rollout grouping.}
\label{fig:curriculum_ablation}
\end{figure}

\begin{table}[t]
\centering
\caption{Curriculum learning over strategy-instruction removal does not improve ARC: no removal performs best, linear removal is intermediate, and constant removal is weakest.}
\label{tab:curriculum_summary}
\resizebox{\linewidth}{!}{
\begin{tabular}{lcccccc}
\toprule
\textbf{Training Setting} & \textbf{Avg.} & \textbf{$\tau$ Avg.} & \textbf{$\tau^2$ Avg.} & \textbf{Reasoning Avg.} & \textbf{IFBench} & \textbf{ArenaHard} \\
\midrule
No removal (ARC default) & 29.59 & 47.00 & 38.22 & 31.58 & 15.99 & 15.18 \\
Linear removal & 27.28 & 38.21 & 31.83 & 34.12 & 16.33 & 15.92 \\
Constant removal & 26.97 & 34.97 & 32.99 & 33.21 & 18.71 & 15.01 \\
\bottomrule
\end{tabular}
}
\end{table}


Table~\ref{tab:curriculum_summary} and Figure~\ref{fig:curriculum_ablation} show that progressively removing strategy instructions does not improve ARC in our setting. Default ARC achieves the strongest $\tau/\tau^2$ tool-use performance and the lowest off-strategy mean and variance, indicating more stable within-strategy rollout grouping. Linear removal provides a partial trade-off by modestly improving some reasoning metrics at the cost of agentic performance, while constant removal performs worst overall. These results suggest that strategy instructions are most effective as a persistent training-time variance-control mechanism rather than a signal to be annealed away. This is consistent with ARC's mechanism: strategy instructions are most useful as a training-time variance-control device, and annealing them away too early weakens within-strategy comparability instead of improving the final policy.

\subsection{Strategy Scalability Analysis}

We conduct an additional ablation study of strategy scaling in Table~\ref{tab:strategy_scaling}, which reveals a clear performance trajectory as strategy components are incrementally integrated, moving from the highest to the lowest training data volume. The \textit{Avg.} shows a consistent upward trend, achieving an impressive 36.1\% total improvement. The most significant gains are concentrated in agentic benchmarks, where performance on $\tau$-bench and $\tau^2$-bench surged by 99\% and 71\%, respectively, as the model transitioned from a single strategy to the full suite.
\begin{figure}[t]
\centering
\includegraphics[width=0.8\linewidth]{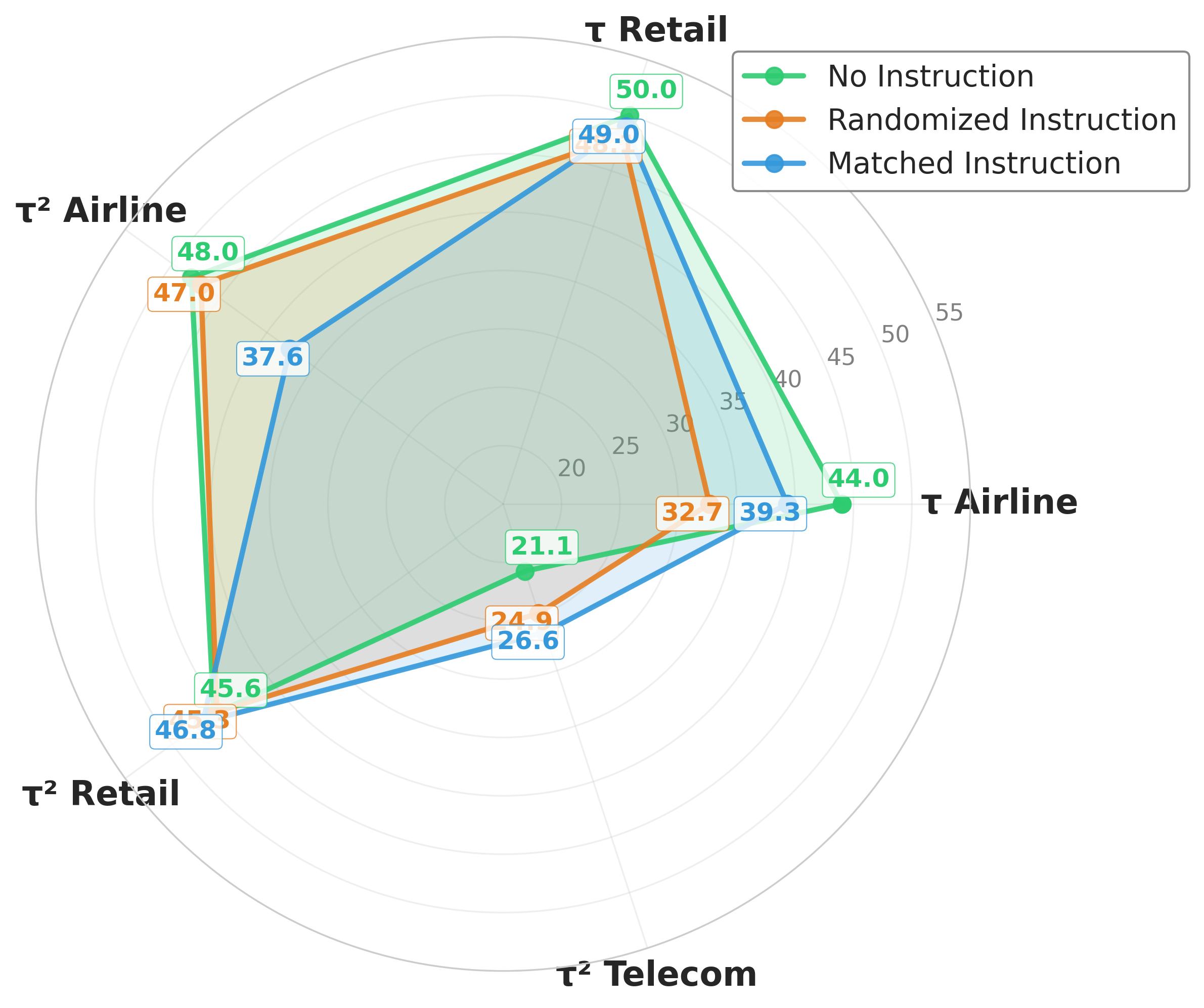}
\caption{Domain-instruction interaction analysis. Inference-time strategy hints show no uniformly superior deployment interface.}
\label{fig:inference_instruction_ablation}
\end{figure}

These results highlight not only the effectiveness of the method in leveraging heterogeneous strategies, but also its scalability: as additional strategies are added, the model continues to improve, demonstrating that even minority strategies—such as \textit{Alignment Check}, which constitutes only 2.5\% of the data—contribute meaningfully to overall capabilities. This highlights the potential for accommodating larger strategy sets without performance degradation.

\begin{table}[t]
\centering
\caption{Strategy scaling ablation. Incremental strategy scaling consistently drives overall performance gains: the full strategy suite performs best, while reasoning tasks peak with three strategies and agentic benchmarks show the most dramatic sensitivity to strategy diversity.}
\label{tab:strategy_scaling}
\resizebox{\linewidth}{!}{
\begin{tabular}{lcccccc}
\toprule
\textbf{Strategy Segment} & \textbf{Avg.} & $\tau$ \textbf{Avg.} & $\tau^2$ \textbf{Avg.} & \textbf{Reasoning Avg.} & \textbf{IFBench} & \textbf{ArenaHard} \\
\midrule
Progress Update & 21.74 & 23.62 & 22.36 & 34.01 & 16.33 & 14.27 \\
Progress Update + Direct Answer & 24.91 & 31.67 & 32.46 & 27.79 & 17.69 & 14.96 \\
Progress Update + Direct Answer + Clarify First & 25.60 & 31.04 & 28.97 & 35.21 & 16.67 & 16.13 \\
Full Suite (4) & 29.59 & 47.00 & 38.22 & 31.58 & 15.99 & 15.18 \\
\bottomrule
\end{tabular}
}
\end{table}

\subsection{Additional Diagnostics on Strategy Instructions}

We evaluate whether the final ARC policy depends on explicit strategy hints at deployment, or whether the strategy behavior has already been internalized during training. We compare matched, randomized, and removed instructions on the same ARC checkpoint across all $\tau$-bench and $\tau^2$-bench submetrics to isolate the effect of prompt-side cues.



Conclusively, Figure~\ref{fig:inference_instruction_ablation} shows that inference-time strategy hints do not improve the deployed policy. The baseline with no strategy hint removal managed to achieve the highest overall average (41.73), outperforming both Randomized Instruction (39.58) and Matched Instruction (39.85). Domain-level analysis reveals that matched instructions help on $\tau$ Airline but hurt on $\tau^2$ Airline, while all settings struggle on $\tau^2$ Telecom. This domain-dependent variability confirms that inference-time hints lack a reliable deployment benefit. Strategy instructions serve their primary function during training by structuring comparison groups, not as persistent cues that should remain at inference.

\section{Conclusion}

Open-ended agent interaction often admits multiple valid behaviors, yet standard group-based RL compares rollouts within a shared relative-reward pool. We argue this induces a reward fairness issue: advantage estimates become confounded when heterogeneous interaction strategies are normalized together, rather than reflecting intrinsic quality differences.

ARC addresses this failure by conditioning the comparison class during rollout construction, ensuring that each group contains a single strategy family. This targeted modification improves the interpretability of relative advantages in multi-strategy regimes, rather than altering the underlying policy optimization machinery. Complementarily, the \inter\ setting makes this issue observable in practice by separating user-facing communication from latent reasoning and tool execution, thereby exposing diverse valid interaction strategies within the same task family.

Empirically, ARC is most effective in in-domain agentic benchmarks where multi-strategy interaction is prevalent. Training on \inter-86K yields higher and more stable rewards, including reduced post-peak degradation under fixed compute budgets. However, improvements are not uniform across all backbones or downstream metrics, reinforcing that ARC should be viewed as a targeted correction for comparison bias rather than a universally dominant optimizer.

Our theoretical analysis provides a mechanism-level explanation for these effects. The results in Section~\ref{sec:arc} are stylized estimator-level characterizations, not end-to-end convergence guarantees. They show how inter-strategy variance in reward-model evaluations can degrade sample efficiency, and how restricting the comparison class can reduce this variance under strategy-dependent reward bias. This supports the empirical findings without claiming to fully characterize RL training dynamics.

Overall, the results suggest that progress in open-ended agent learning depends not only on stronger models or reward signals, but also on how learning algorithms construct fair comparison sets when multiple valid behaviors coexist.

\section*{Limitations}
\label{sec:limitations}

Our study has four main limitations.

\textbf{(1) Strategy abstraction.} ARC relies on a coarse strategy taxonomy. While useful for conditioning and analysis, real interaction behaviors are more nuanced and context-dependent~\cite{lin2023toxicchat,pan2025overrefusal}.

\textbf{(2) Domain scope.} Our strongest results are in open-ended tool-usage settings. Whether ARC generalizes to other domains remains to be tested.

\textbf{(3) Data and annotation dependence.} The method depends on normalized interaction traces and strategy labels, which may reflect annotation bias or deployment-specific patterns.

\textbf{(4) Theoretical scope.} Variance and sample-efficiency analyses clarify one optimization mechanism but do not constitute a full convergence theory for open-ended agent RL.

\bibliographystyle{plainnat}
\bibliography{references}

\newpage
\appendix

\section{Proofs and Theoretical Analysis}
\label{app:proofs}

This appendix provides proofs for the stylized variance claims in Section~\ref{sec:arc}. We distinguish between the \emph{target strategy} $s^\star$, which is assigned to a training prompt before ARC rollout generation, and the \emph{realized strategy} $s_i=f(y_i)$ exhibited by a sampled response $y_i$.

\begin{equation}
\Pr_{y\sim\pi_\theta(\cdot\mid x)}[f(y)=s].
\end{equation}

Under ARC, a target strategy $s^\star$ is assigned before rollout generation and appended to the prompt. Responses are then sampled as
\begin{equation}
y_i
\overset{\mathrm{i.i.d.}}{\sim}
\pi_\theta(\cdot\mid x,s^\star).
\end{equation}
The realized strategy remains $s_i=f(y_i)$ and may differ from $s^\star$ when strategy adherence is imperfect.

\subsection{Proof of Theorem~\ref{thm:variance_amplification} (Variance Amplification)}

\begin{proof}
Decompose $r_i = \mu_{s_i} + \epsilon_i$ where $\mu_{s_i}$ is the mean reward for realized strategy $s_i$ (capturing RM bias) and $\epsilon_i$ is i.i.d.\ zero-mean noise with variance $\sigma^2_{\text{intra}}$, independent of the strategy-dependent mean term. Since $y_i \overset{\mathrm{i.i.d.}}{\sim} \pi_\theta(\cdot \mid x)$ and $s_i = f(y_i)$, the realized strategies $s_1, \ldots, s_N$ are i.i.d.\ under the induced distribution $P_\theta(\cdot \mid x)$, so $\mu_{s_1}, \ldots, \mu_{s_N}$ are i.i.d.\ with variance $\sigma^2_{\text{inter}}$.

The advantage is $\hat{A}_i = r_i - \bar{r} = (\mu_{s_i} - \bar{\mu}) + (\epsilon_i - \bar{\epsilon})$, where $\bar{\mu} = \frac{1}{N}\sum_j \mu_{s_j}$ and $\bar{\epsilon} = \frac{1}{N}\sum_j \epsilon_j$.

Since strategy and noise are independent:
\begin{equation} \small
\mathrm{Var}[\hat{A}_i] = \mathrm{Var}[\mu_{s_i} - \bar{\mu}] + \mathrm{Var}[\epsilon_i - \bar{\epsilon}]
\end{equation}

For any i.i.d.\ sequence $Z_1, \ldots, Z_N$ with variance $\sigma^2$:
\begin{equation} \small
\mathrm{Var}[Z_i - \bar{Z}] = \left(1 - \frac{1}{N}\right)^2 \sigma^2 + \frac{N-1}{N^2}\sigma^2 = \sigma^2\left(1 - \frac{1}{N}\right)
\end{equation}

Applying this to both terms yields $\mathrm{Var}[\hat{A}_i] = (\sigma^2_{\text{inter}} + \sigma^2_{\text{intra}})(1 - 1/N)$.
\end{proof}

\subsection{Proof of Theorem~\ref{thm:variance_reduction} (Variance Reduction via Conditioning)}

\begin{proof}
ARC conditions generation on the target strategy $s^\star$, and under perfect compliance, $f(y_i) = s^\star$, so $\mu_{s_i} = \mu_{s^*}$ for all $i$. The advantage simplifies to:
\begin{equation} \small
\hat{A}_i = \epsilon_i - \bar{\epsilon}
\end{equation}
Since $\epsilon_i$ are i.i.d.\ with variance $\sigma^2_{\text{intra}}$:
\begin{equation} \small
\mathrm{Var}[\hat{A}_i | s^*] = \sigma^2_{\text{intra}}\left(1 - \frac{1}{N}\right)
\end{equation}
The between-strategy mean-shift term is absent from this centered-advantage variance decomposition.

\begin{equation} \small
\begin{aligned}
\mathrm{Var}[\hat{A}_i | s^*] ={}&
\Bigl(\sigma^2_{\text{intra}}(x,s^\star) \\
&+ \operatorname{Var}_{S\sim P\theta(\cdot\mid x,s^\star)}[\mu_S(x)]\Bigr)
\left(1-\frac{1}{N}\right).
\end{aligned}
\end{equation}
With imperfect compliance, the ARC variance additionally contains the residual between-strategy term, so the predicted variance reduction is correspondingly attenuated.
\end{proof}

\subsection{Derivation for Theorem~\ref{thm:sample_complexity} (Idealized Sample-Complexity Comparison)}

\begin{arctheorem}[Gradient Variance Bound]
\label{thm:gradient_variance_app}
Let $g_{\text{ARC}}$ and $g_{\text{std}}$ denote policy gradients under strategy-conditioned and standard sampling with equal group size $N$. For analytic tractability, assume: (i)~the relevant advantage-score mixed moments approximately factorize conditional on $x$; (ii)~the score-function second moments $F = \mathbb{E}[\|\nabla_i\|^2]$ and $F' = \mathbb{E}[\langle \nabla_i, \nabla_j \rangle]$ ($i \neq j$) are comparable up to constants across the two sampling schemes; and (iii)~ARC satisfies the perfect-compliance idealization of Theorem~\ref{thm:variance_reduction}. Then:
\begin{equation} \small
\frac{\mathbb{E}[\|g_{\text{ARC}}\|^2]}{\mathbb{E}[\|g_{\text{std}}\|^2]} \approx \frac{\sigma^2_{\text{intra}}}{\sigma^2_{\text{intra}} + \sigma^2_{\text{inter}}}
\end{equation}
\end{arctheorem}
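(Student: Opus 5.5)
I would write the group gradient as $g=\frac{1}{N}\sum_{i=1}^N \hat A_i \nabla_i$, where $\nabla_i=\nabla_\theta\log\pi_\theta(y_i\mid\cdot)$ is evaluated under the relevant sampling scheme. The plan is to expand the second moment into diagonal and off-diagonal parts:
\begin{equation} \small
\mathbb{E}[\|g\|^2]=\frac{1}{N^2}\Bigl(\sum_i \mathbb{E}[\hat A_i^2\|\nabla_i\|^2]+\sum_{i\neq j}\mathbb{E}[\hat A_i\hat A_j\langle\nabla_i,\nabla_j\rangle]\Bigr).
\end{equation}
Assumption (i) lets me factorize each mixed moment conditional on $x$. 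The diagonal terms become $\mathbb{E}[\hat A_i^2]\,F$ and the off-diagonal terms become $\mathbb{E}[\hat A_i\hat A_j]\,F'$. The advantage moments are then all that separate the two schemes, and I take them from the earlier results.

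For the diagonal part, $\mathbb{E}[\hat A_i]=0$ because the advantage is centered, so $\mathbb{E}[\hat A_i^2]=\mathrm{Var}[\hat A_i]$. This equals $(\sigma^2_{\text{intra}}+\sigma^2_{\text{inter}})(1-1/N)$ by Theorem~\ref{thm:variance_amplification}, and $\sigma^2_{\text{intra}}(1-1/N)$ under assumption (iii) by Theorem~\ref{thm:variance_reduction}. For the off-diagonal part I use the fact that centered advantages sum to zero. Hence $\sum_j \hat A_j=0$, and by exchangeability $\mathbb{E}[\hat A_i\hat A_j]=-\mathrm{Var}[\hat A_i]/(N-1)$. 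This is the same kind of decomposition as the proof of Theorem~\ref{thm:variance_amplification}, applied to both the $\mu$ component and the $\epsilon$ component, which are independent. Every advantage moment is therefore a scheme-independent multiple of one total variance: $\sigma^2_{\text{intra}}+\sigma^2_{\text{inter}}$ for standard sampling and $\sigma^2_{\text{intra}}$ for ARC. Substituting gives
\begin{equation} \small
\mathbb{E}[\|g\|^2]\approx \frac{N-1}{N^2}\,\sigma^2_{\text{tot}}\Bigl(F-F'\Bigr),
\end{equation}
where $\sigma^2_{\text{tot}}$ is the scheme's total variance.

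Taking the ratio, assumption (ii) says $F-F'$ is comparable across the two schemes. It then cancels up to constants, and the ratio reduces to $\sigma^2_{\text{intra}}/(\sigma^2_{\text{intra}}+\sigma^2_{\text{inter}})$. The sample-complexity statement of Theorem~\ref{thm:sample_complexity} then follows from a standard concentration bound: the number of rollouts needed for $\varepsilon$-accuracy with confidence $1-\delta$ scales with the second moment times $\log(1/\delta)/\varepsilon^2$. The corollary's ratio is the reciprocal of the ratio above.

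I expect the main obstacle to be justifying the factorization in (i). In reality $\hat A_i$ and $\nabla_i$ both depend on $y_i$, and in the unconditioned case they depend in particular on the realized strategy $s_i$. The inter-strategy mean shift $\mu_{s_i}$ is therefore correlated with the score, since the score direction toward a strategy is exactly what the bias pushes on. So the argument must stay explicitly approximate, as the statement's $\approx$ indicates. A related risk is that $F-F'$ could be close to zero. In that case the "comparable up to constants" in (ii) would not yield a clean ratio, so I would state the result only in the regime where $F-F'$ is bounded away from zero in both schemes.
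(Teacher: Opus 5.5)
Your proposal is correct and follows the paper's proof essentially step for step: expand $\mathbb{E}[\|g\|^2]$ into diagonal and off-diagonal parts, factorize via (i), and use the advantage moments $\sigma^2_{\text{tot}}(1-1/N)$ and $-\sigma^2_{\text{tot}}/N$, which gives $\frac{N-1}{N^2}\sigma^2_{\text{tot}}(F-F')$ and lets (ii) cancel the score factor. The only difference is that you get the cross moment from $\sum_j \hat A_j = 0$ plus exchangeability, where the paper computes $\mathrm{Cov}[r_i-\bar r,\,r_j-\bar r]$ directly; also, your worry about $F-F'$ vanishing is unnecessary, since the score has zero mean under the sampling policy, so $\mathbb{E}[\langle\nabla_i,\nabla_j\rangle\mid x]=\|\mathbb{E}[\nabla_i\mid x]\|^2=0$ and hence $F-F'=F$.
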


\begin{proof}
Write $g = \frac{1}{N} \sum_{i=1}^{N} \hat{A}_i \nabla_i$ where $\nabla_i = \nabla_\theta \log \pi_\theta(y_i | x)$. Under assumption~(i):
\begin{equation} \small
\mathbb{E}[\|g\|^2] = \frac{1}{N^2}\left[N \mathbb{E}[\hat{A}_i^2]\, F + N(N-1)\,\mathbb{E}[\hat{A}_i \hat{A}_j]\, F'\right]
\end{equation}

For i.i.d.\ rewards $r_i$ with total variance $\sigma^2_r$, the centered advantages satisfy:
\begin{align}
\mathbb{E}[\hat{A}_i^2] &= \sigma^2_r\left(1 - \frac{1}{N}\right) \\
\mathbb{E}[\hat{A}_i \hat{A}_j] &= -\frac{\sigma^2_r}{N} \quad (i \neq j)
\end{align}
The latter follows from $\mathrm{Cov}[r_i - \bar{r},\, r_j - \bar{r}] = -\sigma^2_r/N$.

Substituting:
\begin{equation}
\mathbb{E}[\|g\|^2] = \frac{\sigma^2_r(N-1)}{N^2}(F - F')
\end{equation}

Since $\sigma^2_r = \sigma^2_{\text{intra}} + \sigma^2_{\text{inter}}$ under standard sampling and $\sigma^2_r = \sigma^2_{\text{intra}}$ under ARC (Theorems~\ref{thm:variance_amplification}--\ref{thm:variance_reduction}), assumption~(ii) yields the stated approximation.
\end{proof}

\begin{proof}[Derivation for Theorem~\ref{thm:sample_complexity}]
This argument should be read as a stylized variance-to-sample-efficiency translation rather than a full convergence proof. Under a sub-Gaussian gradient-estimation assumption, the number of gradient samples required to achieve $\|\hat{g} - \nabla J(\theta)\| \leq \varepsilon$ with probability $\geq 1-\delta$ scales as $O(\sigma^2_g \log(1/\delta) / \varepsilon^2)$, where $\sigma^2_g$ is the per-sample gradient variance. From Theorem~\ref{thm:gradient_variance_app}, $\sigma^2_{g,\text{ARC}} / \sigma^2_{g,\text{std}} \approx \sigma^2_{\text{intra}} / (\sigma^2_{\text{intra}} + \sigma^2_{\text{inter}})$, and the same approximation carries over to this idealized sample-efficiency comparison.
\end{proof}

\subsection{Effective Variance Analysis for Curriculum Learning}

For the curriculum learning schedule studied in Section~\ref{sec:curriculum_analysis}, the expected advantage variance across the batch at training step $t$ is a mixture of conditioned and unconditioned groups:
\begin{equation}
\begin{aligned}
\mathrm{Var}_{\text{eff}}(t) ={}& \bigl(1 - p_d(t)\bigr)\,
  \sigma^2_{\text{intra}}\!\left(1 - \tfrac{1}{N}\right) \\
& {}+ p_d(t)\,\bigl(\sigma^2_{\text{intra}} +
  \sigma^2_{\text{inter}}\bigr)\!\left(1 - \tfrac{1}{N}\right).
\end{aligned}
\end{equation}

Since the second term exceeds the first by $\sigma^2_{\text{inter}}(1 - 1/N)$, $\mathrm{Var}_{\text{eff}}(t)$ increases monotonically from $\approx \sigma^2_{\text{intra}}(1 - 1/N)$ to $\approx (\sigma^2_{\text{intra}} + \sigma^2_{\text{inter}})(1 - 1/N)$. This monotonic increase gives the curriculum learning schedule a variance-based progression: the model trains under progressively noisier optimization landscapes as it becomes more capable.

\subsection{Information-Theoretic Perspective}

Assuming strategy $s$ is determined by the response $y$ (i.e., $s = f(y)$ for some deterministic function), the chain rule of mutual information gives:
\begin{equation} \small
I(r; y | x) = I(r; y | x, s) + I(r; s | x)
\end{equation}

The term $I(r; s | x)$ captures strategy-linked reward dependence that is orthogonal to within-strategy quality assessment.

Under the idealized perfect-compliance setting, conditioning on the target strategy removes between-strategy variation from within-group comparisons, emphasizing the within-strategy reward signal $I(r;y\mid x,s)$.

\section{Extended Related Work}
\label{app:extended_related_work}

\textbf{Tool-Augmented and Interactive Agents.}
Tool-augmented language models have been studied through self-supervised tool learning~\cite{schick2023toolformer}, large-scale API integration~\cite{qin2023toolllm,patil2023gorilla}, synthetic function-calling corpora~\cite{du2024apigen,toolace2024}, and increasingly sophisticated agent architectures~\cite{du2024anytool,yuan2023craft,chen2025dingtalk,tong2026onemodel}. A parallel line of work studies interaction structure, including ReAct-style reasoning-action interleaving~\cite{yao2023react}, structured planning~\cite{wei2022chain,yao2023tree,wang2023plansolve,tong2023planning}, and reflective or search-based agents~\cite{shinn2023reflexion,zhou2023lats,tong2024mistakes}. Our interest is not only in adding tools or planning steps, but in exposing interaction as a first-class, user-visible channel while execution is still ongoing.

\textbf{Interleaved Reasoning and User Experience.}
Recent interleaving methods improve perceived responsiveness by alternating internal reasoning with partial textual output~\cite{liang2025plantain,xie2025interleaved}. These works mainly address the sequencing of thought and answer tokens. \inter\ targets a different bottleneck: long-running external actions. When a tool call itself is the latency source, making the communication channel independent from the execution channel becomes the key design move. This connects to broader human-AI interaction work on transparency, feedback, and controllability~\cite{zhang2024humanai,liao2023transparency,li2024trustagent,zhang2024controllable,arenahard}, but in an agent context where the cost of opaque waiting is especially high.

\textbf{RLHF Estimators and Multi-Behavior Collapse.}
Our RL analysis sits within the broader literature on policy-gradient and RLHF optimization~\cite{ppo2017,ahmadian2024rloo,rafailov2023dpo,ethayarajh2024kto,wang2025bpo,xu2026reward,pan2026optimal}, especially group-relative estimators such as GRPO~\cite{deepseekmath2024} and refinements for entropy preservation or bias correction~\cite{yu2025dapo,liu2025drgrpo}. These methods have proven effective for reasoning-heavy tasks~\cite{deepseekr1}, but open-ended multi-behavior settings raise a distinct collapse risk~\cite{hamilton2024detecting}: the policy can converge to whichever behavior the reward favors. Our contribution is to isolate one concrete statistical mechanism for that collapse---cross-strategy contamination of relative advantages---and study a conditioning-based remedy.

\textbf{External Guidance in RL.}
Methods such as SAGE~\cite{sage2025}, Scaf-GRPO~\cite{scafgrpo2024}, LUFFY~\cite{luffy2025}, and ExGRPO~\cite{exgrpo2025} show that auxiliary guidance can materially improve RL training by mitigating sparse rewards, structuring exploration, incorporating demonstrations, or reusing successful trajectories. ARC is adjacent in form but different in purpose. The strategy instruction is not introduced to reveal the answer, densify reward, or bias the policy toward a target trajectory. It is introduced to constrain which rollouts are compared to one another, so that relative advantage estimation is performed inside a behaviorally coherent comparison class.

\section{Reward Details}
\label{app:reward_ablation}

\subsection{Final Reward Used in Main Experiments}

The specific reward used in our full \inter\ instantiation is not ARC's core mechanism; it is a setting-specific design choice for stabilizing execution-grounded training. Following recent reward-design work that combines verifiable structural signals with denser model-based feedback~\cite{toolrl2025,hero2025}, we use a compact three-term reward. The \textbf{format reward} $R_{\mathrm{fmt}} \in \{0,1\}$ checks for at least one well-formed \texttt{<tool\_call>} or \texttt{<answer>} block with properly matched, non-nested tags. The \textbf{tool reward} $R_{\mathrm{tool}} \in \{-1,0,1\}$ evaluates exact tool-call correctness: the tool identifier, argument key set, and argument values must all match the reference for $R_{\mathrm{tool}} = 1$. The \textbf{answer reward} $R_{\mathrm{ans}} \in \{0,0.50,1\}$ is an LLM-judge score on the extracted \texttt{<answer>} content. To couple semantics to correct execution, we set $R_{\mathrm{ans}}=\mathrm{Eval}(y, y^{*})$ only when $R_{\mathrm{tool}}=1$, and $R_{\mathrm{ans}}=0$ otherwise. The final reward is simply $R = R_{\mathrm{fmt}} + R_{\mathrm{tool}} + R_{\mathrm{ans}}$. If both prediction and reference contain no answer spans, we set $R_{\mathrm{ans}}=1$ to reflect structural agreement; if evaluation fails, we return a neutral fallback score.

\subsection{Detailed Reward Ablation}
\label{app:reward_study}

A central challenge in training \inter\ agents is designing rewards that evaluate multi-channel outputs---internal reasoning, tool calls, and user-facing \texttt{<answer>} spans---without baking in strategy-dependent bias. We study two reward-side choices: the reward construction itself, and the reward model used to score semantic answer quality.

\begin{table}[t]
\centering
\caption{Reward-construction ablation. All variants include the format reward. Exact tool matching improves tool use, and the final gated design gives the strongest tool-centric operating point. Unless explicitly varied, all main RL baselines and ARC variants in the paper use the final row.}
\label{tab:reward_versions}
\resizebox{\linewidth}{!}{
\begin{tabular}{lcccccc}
\toprule
\textbf{Reward Comparisons} & \textbf{Avg.} & $\tau$ \textbf{Avg.} & $\tau^2$ \textbf{Avg.} & \textbf{Reasoning Avg.} & \textbf{IFBench} & \textbf{ArenaHard} \\
\midrule
Partial tool + answer judge & 24.90 & 30.92 & 29.16 & 33.19 & 14.97 & 16.24 \\
Exact tool + answer judge & 27.89 & 37.31 & 31.94 & 36.24 & 18.03 & 15.94 \\
Penalized tool + gated judge & 27.98 & 43.93 & 36.49 & 29.60 & 16.33 & 13.57 \\
\bottomrule
\end{tabular}
}
\end{table}

The reward constructions in Table~\ref{tab:reward_versions} differ along two axes: (i) tool-execution strictness, and (ii) coupling between execution and semantic reward. All variants include a shared \textbf{format reward} enforcing structural validity. Moving from partial to exact tool matching strengthens the execution-grounded signal, and the final gated design amplifies this effect, increasing $\tau$ Avg.\ by about 42\% over the weakest variant. This comes with reduced performance on broader reasoning-style evaluations, while leaving overall average performance largely unchanged. The results reflect our objective: to prioritize rewards that favor answers grounded in correct tool execution rather than signals agnostic to whether the answer is causally supported by the executed actions.

\begin{table}[t]
\centering
\caption{Reward-model ablation for the same Qwen3-8B policy under the final reward construction used throughout the main experiments. Varying only the external judge shows that the 235B judge provides the strongest tool-use operating point.}
\label{tab:reward_model_versions}
\resizebox{\linewidth}{!}{
\begin{tabular}{lcccccc}
\toprule
\textbf{Judge Model} & \textbf{Avg.} & \textbf{$\tau$ Avg.} & \textbf{$\tau^2$ Avg.} & \textbf{Reasoning Avg.} & \textbf{IFBench} & \textbf{ArenaHard} \\
\midrule
Qwen3-235B-Instruct & 27.98 & 43.93 & 36.49 & 29.60 & 16.33 & 13.57 \\
Qwen3.5-27B & 21.67 & 25.69 & 22.55 & 27.97 & 19.39 & 12.77 \\
Qwen3.5-122B & 25.02 & 35.22 & 30.70 & 29.27 & 16.67 & 13.27 \\
\bottomrule
\end{tabular}
}
\end{table}

Table~\ref{tab:reward_model_versions} fixes the policy at Qwen3-8B and varies only the external reward judge. Under this controlled comparison, the 235B judge gives the strongest $\tau/\tau^2$ performance and the best overall balance for execution-grounded training, while smaller judges recover only isolated gains on auxiliary metrics. These findings led us to select the 235B judge for our main experiments, as it provides the most consistent signal for tool-execution verification.

\textbf{Tool matching.}
\textit{Partial tool} assigns credit to approximately correct tool calls, providing dense but noisy supervision. Replacing this with \textbf{exact tool matching} yields stricter credit assignment, improving reliability of the training signal.

\textbf{Negative penalties.}
The final variant introduces a \textbf{negative penalty} for incorrect tool execution, explicitly discouraging spurious or malformed calls and sharpening optimization toward valid trajectories.

\textbf{Semantic reward coupling.}
All variants use an LLM-based \textbf{answer judge} for user-visible responses. In the first two variants, this reward is applied unconditionally, allowing fluent but unsupported answers to receive credit. The final variant applies a \textbf{gated answer judge}, where semantic reward is issued only when exact tool execution is correct, coupling answer quality with execution validity.

\section{Entropy Analysis}
\label{app:arc_ablation}

To separate ARC's core mechanism from setting-specific stabilizers, Table~\ref{tab:combined_ablation} fixes the reward family and varies only two ingredients: strategy-conditioned grouping and entropy regularization. The first block asks which component drives the main gain over GRPO; the second block studies entropy sensitivity once grouping is enabled.

\begin{table}[t]
\centering
\caption{Ablation study on ARC: Training settings and entropy hyperparameters. Strategy-conditioned grouping provides the largest single gain over base GRPO. While entropy regularization alone slightly decreases performance, combining it with strategy-conditioned grouping further improves the overall operating point. The best entropy value (0.001) yields the highest average score.}
\label{tab:combined_ablation}
\resizebox{\linewidth}{!}{
\begin{tabular}{lcccccc}
\toprule
\textbf{Configuration} & \textbf{Avg.} & $\tau$ \textbf{Avg.} & $\tau^2$ \textbf{Avg.} & \textbf{Reasoning Avg.} & \textbf{IFBench} & \textbf{ArenaHard} \\
\midrule
\multicolumn{7}{l}{\textit{Training Ingredients}} \\
GRPO & 25.43 & 35.81 & 31.72 & 27.25 & 18.71 & 13.66 \\
GRPO + Entropy & 24.23 & 33.53 & 29.42 & 27.66 & 16.67 & 13.85 \\
GRPO + Strategy-Conditioned Grouping & 27.80 & 43.93 & 35.55 & 29.60 & 16.33 & 13.57 \\
\rowcolor{blue!10} GRPO + Strategy-Conditioned Grouping + Entropy & \textbf{29.59} & 47.00 & 38.22 & 31.58 & 15.99 & 15.18 \\
\midrule
\multicolumn{7}{l}{\textit{Entropy Sensitivity (with Strategy-Conditioned Grouping)}} \\
Entropy=0.01 (Reward Collapse) & --- & --- & --- & --- & --- & --- \\
\rowcolor{blue!10} Entropy=0.001 & \textbf{29.59} & 47.00 & 38.22 & 31.58 & 15.99 & 15.18 \\
Entropy=0.0001 & 27.48 & 40.95 & 34.11 & 32.32 & 14.29 & 15.73 \\
\bottomrule
\end{tabular}
}
\end{table}

Table~\ref{tab:combined_ablation} supports the paper's main causal story. Holding the reward design fixed, strategy-conditioned grouping is the primary source of improvement over base GRPO, whereas entropy regularization alone does not explain the gain and can even weaken performance. Entropy becomes useful only after the comparison class has been cleaned up by grouping, where it acts as a stabilizer for multi-channel generation rather than as the main mechanism. The entropy sweep further shows that this effect is sensitive to scale: too much entropy leads to reward collapse, while a moderate value of $0.001$ gives the best overall operating point. Taken together, these results suggest that ARC helps mainly by changing \emph{how} rollouts are compared, with entropy regularization serving as a secondary component that improves the stability of that mechanism in the full \inter\ recipe.

\section{Annotation Guidelines for \inter-86K}
\label{app:annotation_guidelines}

This section provides the detailed annotation guidelines used by both models and human annotators for strategy assignment.

\textbf{Progress Update}: Proactively provide progress updates to the user when simultaneously executing a tool call or multi-step tasks. Notify the user of the current progress after completing each tool call/sub-task. Let the user know the task is in progress to reduce waiting anxiety.

\textbf{Direct Answer}: Answer directly without invoking any tools if the information is already known or contextually available. Be concise and clear; avoid verbosity.

\textbf{Clarify First}: If the user's request lacks sufficient information, ask for clarification first. Do not guess the user's intent; ask directly. Execute tool calls only after the user confirms. Suitable for vague, ambiguous, or incomplete requests.

\textbf{Alignment Check}: Restate the user's requirements first to confirm your understanding is correct. Execute only after the user confirms. Avoid making mistakes due to misunderstanding. Suitable for easily misunderstood, important, or irreversible operations.



\section{Mechanism-Level Analysis and Label-Noise Robustness}
\label{app:mechanism_analysis}

To complement the idealized theoretical analysis, we perform rollout-level analyses of empirical reward unfairness, realized inter- and intra-strategy reward variance, strategy adherence, and sensitivity to strategy-label noise.

\paragraph{Realized inter- and intra-strategy reward variance.}
On actual rollouts, we estimate the realized between-strategy and within-strategy reward variances. As shown in Table~\ref{tab:variance_ratio}, the estimated inter-/intra-strategy variance ratio decreases from 0.417 under GRPO to 0.074 under ARC, corresponding to an approximately 82\% reduction. This is consistent with the proposed variance-reduction mechanism. However, the confidence intervals are wide because relatively few prompt groups contain multiple realized strategies, so we interpret this result as mechanism-level evidence rather than definitive empirical validation of the stylized theoretical analysis.

\begin{table}[H]
\centering
\caption{Estimated realized inter- and intra-strategy reward variance on rollout data. Ratio denotes $\hat{\sigma}^2_{\mathrm{inter}} / \hat{\sigma}^2_{\mathrm{intra}}$, with 95\% confidence intervals.}
\label{tab:variance_ratio}
\scalebox{0.9}{
\begin{tabular}{lccc}
\toprule
\textbf{Method}
& $\hat{\sigma}^2_{\mathrm{inter}}$
& $\hat{\sigma}^2_{\mathrm{intra}}$
& \textbf{Ratio (95\% CI)} \\
\midrule
GRPO
& 0.0555
& 0.1331
& 0.417 [0.002, 3.015] \\
ARC
& 0.0030
& 0.0401
& 0.074 [0.002, 0.410] \\
\bottomrule
\end{tabular}
}
\end{table}

\paragraph{Sensitivity to strategy-label noise.}
We additionally test the sensitivity of ARC to imperfect strategy labels by corrupting 50\% of the strategy instructions during training. For each corrupted example, the original strategy instruction is replaced with a uniformly sampled alternative strategy. This experiment uses one training seed, and the reported standard deviations are computed over three independent evaluations.

As shown in Table~\ref{tab:label_noise}, ARC remains above the GRPO baseline under substantial label corruption, with a tool-use average of 35.01 compared with 33.35 for GRPO. However, performance is substantially lower than clean ARC at 41.73, indicating that ARC benefits from accurate strategy assignments and degrades under noisy training labels.

\begin{table*}[t]
\centering
\caption{Sensitivity of ARC to strategy-label corruption. The corrupted setting replaces 50\% of strategy instructions with a uniformly sampled alternative strategy during training. Results are based on a single training run (seed 1); values after $\pm$ denote standard deviations over three independent evaluation runs.}
\label{tab:label_noise}
\resizebox{0.9\textwidth}{!}{
\begin{tabular}{lccccccc}
\toprule
\textbf{Method}
& \textbf{Tool-use Avg.}
& $\tau$-\textbf{Airline}
& $\tau$-\textbf{Retail}
& $\tau^2$-\textbf{Airline}
& $\tau^2$-\textbf{Retail}
& $\tau^2$-\textbf{Telecom} \\
\midrule
GRPO
& 33.35
& $31.33 \pm 4.62$
& $40.29 \pm 4.94$
& $36.67 \pm 6.43$
& $40.64 \pm 5.97$
& $17.84 \pm 2.82$ \\

GRPO + ARC
& 41.73
& $44.00 \pm 3.06$
& $50.00 \pm 1.33$
& $48.00 \pm 6.11$
& $45.61 \pm 0.86$
& $21.05 \pm 0.91$ \\

\textbf{GRPO + ARC, 50\% corrupted}
& \textbf{35.01}
& $\mathbf{34.00 \pm 3.46}$
& $\mathbf{42.90 \pm 1.33}$
& $\mathbf{37.16 \pm 0.73}$
& $\mathbf{42.11 \pm 3.16}$
& $\mathbf{18.87 \pm 1.07}$ \\
\bottomrule
\end{tabular}
}
\end{table*}

\section{Interaction Strategy Examples}
\label{app:strategies}

\subsection{Progress Update Strategy}

\begin{lstlisting}[
    basicstyle=\small\ttfamily,
    breaklines=true,
    breakatwhitespace=true,         % Ensures normal wrapping at spaces
    postbreak={},                   % Removes the arrow for a cleaner "normal" look
    frame=none,                     % Removes the left line if desired
    backgroundcolor=\color{white},  % Overrides the gray background
    extendedchars=true,
    literate={–}{{-}}1 {—}{{---}}3  % Maps en dash to - and em dash to ---
]
[user]
Help me analyze Acme Corp's financial status.

[assistant]
The user wants a full financial analysis. I'll start by fetching the income statement.
<tool_call>
{"name": "get_income_statement", "arguments": {"company": "Acme Corp"}}
</tool_call>
<answer>I'm retrieving Acme Corp's financial statements - step 1 of 3... </answer>

[tool]
get_income_statement -> {"revenue": "$5.2B", "net_income": "$430M"}

[assistant]
Income statement retrieved. Now fetching the balance sheet for ratio analysis.
<tool_call>
{"name": "get_balance_sheet", "arguments": {"company": "Acme Corp"}}
</tool_call>
<answer>Income statement retrieved. Now analyzing the balance sheet - step 2 of 3...</answer>
\end{lstlisting}

\subsection{Direct Answer Strategy}

\begin{lstlisting}[
    basicstyle=\small\ttfamily,
    breaklines=true,
    breakatwhitespace=true,         % Ensures normal wrapping at spaces
    postbreak={},                   % Removes the arrow for a cleaner "normal" look
    frame=none,                     % Removes the left line if desired
    backgroundcolor=\color{white},  % Overrides the gray background
]
[user]
What is the capital of China?

[assistant]
<answer>The capital of China is Beijing.</answer>
\end{lstlisting}

\subsection{Clarify First Strategy}

\begin{lstlisting}[
    basicstyle=\small\ttfamily,
    breaklines=true,
    breakatwhitespace=true,         % Ensures normal wrapping at spaces
    postbreak={},                   % Removes the arrow for a cleaner "normal" look
    frame=none,                     % Removes the left line if desired
    backgroundcolor=\color{white},  % Overrides the gray background
]
[user]
Book a meeting room for me.

[assistant]
Request lacks required information: time and capacity.
<answer>I'd be happy to book a meeting room. Could you tell me:
(1) What date and time?
(2) How many people will attend?</answer>

[user]
Tomorrow at 10 AM, 8 people.

[assistant]
Details confirmed. Proceeding with booking.
<tool_call>{"name": "book_meeting_room", "arguments": {"date": "tomorrow", "time": "10:00", "capacity": 8}}</tool_call>
<answer>Proceeding with booking a room for tomorrow at 10 AM suitable for 8 people...</answer>

[tool]
book_meeting_room -> {"status": "confirmed", "room": "Room 3B"}

[assistant]
<answer>Booked! Room 3B for tomorrow at 10:00 AM, capacity 8.</answer>
\end{lstlisting}

\subsection{Alignment Check Strategy}

\begin{lstlisting}[
    basicstyle=\small\ttfamily,
    breaklines=true,
    breakatwhitespace=true,         % Ensures normal wrapping at spaces
    postbreak={},                   % Removes the arrow for a cleaner "normal" look
    frame=none,                     % Removes the left line if desired
    backgroundcolor=\color{white},  % Overrides the gray background
    extendedchars=true,
    literate={–}{{-}}1 {—}{{---}}3  % Maps en dash to - and em dash to ---
]
[user]
Send last quarter's sales data to Manager Wang.

[assistant]
This is a send operation — I should confirm the exact scope and recipient before executing.
<answer>Just to confirm: you'd like me to send the Q1 2024 (January–March) sales data report to Manager Wang at wang@company.com. Is that correct?</answer>

[user]
Yes, that's right.

[assistant]
Intent confirmed. Executing send operation.
<tool_call>{"name": "send_email", "arguments": {"to": "wang@company.com", "subject": "Q1 2024 Sales Data", "attachment": "q1_sales.pdf"}}</tool_call>

[tool]
send_email -> {"status": "sent"}

[assistant]
<answer>Sent! Q1 2024 sales report delivered to Manager Wang.</answer>
\end{lstlisting}

\section{System Prompt}
\label{app:system_prompt}

The system prompt used for training and evaluation:

\begin{tcolorbox}[
    enhanced,
    breakable,
    boxrule=0.8pt,
    colback=white,
    colframe=gray!75!black,
    fonttitle=\bfseries,
    title=System Prompt Configuration,
]
    \subsection*{System Role}
    You are an advanced AI assistant capable of streamingly interleaved interactions, reasoning, and tool executions. Do not wait for the entire process to finish. You can provide partial updates in \texttt{<answer>} while continuing to reason and call tools.

    \vspace{1em}
    \subsection*{Output Requirements}
    Your output must be structured into functional segments. Texts that are NOT wrapped in \texttt{<answer>} tags are treated as internal reasoning and are hidden from the user. Ensure all XML-style tags (\texttt{<tool\_call>}, \texttt{<answer>}) are strictly paired and properly closed.

    \vspace{1em}
    \subsection*{Tag Definitions}
    \begin{itemize}[leftmargin=1em]
        \item \textbf{Internal Reasoning:} (No tag required) Use plain text outside of any tags for logical reasoning, task decomposition, or analyzing tool outputs. This is hidden from the user.

        \item \textbf{\texttt{<answer>}:} (User Visible) This is the ONLY content displayed to the user. Use this for status updates, partial answers, or the final conclusion. If your response relies on information from the tool outputs, do not expose the raw tool output verbatim. Instead, integrate and summarize the relevant facts naturally in your own words.

        \item \textbf{\texttt{<tool\_call>}:} (Internal Only) Use this to call external functions using the provided tools, if any.
    \end{itemize}
\end{tcolorbox}

\section{Evaluation Setup and Configuration}
\label{app:evaluation_setup}

This appendix describes the standardized evaluation configuration used across all benchmarks reported in Table~\ref{tab:main_results}.

\subsection{Inference Configuration}

All evaluations were conducted using the following controlled parameters:


\paragraph{Concurrency.}
TTFT measurements are collected with concurrency level set to 16 parallel requests on the same GPU (H200). This concurrency level is maintained consistently across all $\tau$-bench evaluations to ensure fair latency comparisons.

\paragraph{Interleaved Mode.}
When interleaved mode is enabled, we use the same structured interface as in training: \texttt{<answer>} for user-visible content and \texttt{<tool\_call>} for tool calls. This enables fine-grained measurement of time-to-first-token (TTFT) as the model can provide partial answers while continuing reasoning and tool execution.

\subsection{Benchmark Configurations}

\paragraph{$\tau$-Bench.}
Following the official $\tau$-Bench evaluation protocol~\cite{tau2024}, we evaluate tool-calling ability across the \textit{retail} and \textit{airline} domains. The user simulator is powered by the Qwen3-235B-A22B-Instruct model~\cite{qwen3}. We report the average reward score (success rate) and TTFT. Each evaluation run is repeated 3 times and we report the mean score.

\paragraph{$\tau^2$-Bench.}
Following the official $\tau^2$-Bench evaluation protocol~\cite{tau22025}, we evaluate on the \textit{retail}, \textit{airline}, and \textit{telecom} domains. The user simulator is powered by the Qwen3-235B-A22B-Instruct model~\cite{qwen3}. We repeat the evaluation 3 times and we report the mean score.

\paragraph{AIME 2026.}
The American Invitational Mathematics Examination 2026 dataset~\cite{matharena2025} contains challenging math problems with integer answers in [0, 999]. We extract the model's answer from \texttt{\textbackslash boxed\{\}} notation. Each problem is evaluated 8 times with different random seeds, and accuracy is computed by exact match after normalization. The instruction prompt asks the model to solve the problem and format the final answer.

\paragraph{GPQA Diamond.}
The Google-Proof Q\&A (GPQA) Diamond subset~\cite{gpqa2024} contains expert-level science questions with multiple-choice answers. We evaluate the model's ability to select the correct option (A, B, C, or D). Each question is run 8 times, and we compute the accuracy as the fraction of correct selections.

\paragraph{HMMT 2025.}
The Harvard-MIT Mathematics Tournament February 2025 dataset~\cite{matharena2025} contains competition-level math problems across multiple rounds. Answers may be integers, fractions, expressions, or other mathematical objects. Each problem is evaluated 8 times.

\paragraph{IFBench.}
Instruction Following Benchmark~\cite{ifbench2025} tests the model's ability to follow complex, multi-constraint instructions. The benchmark contains test cases with specific formatting, length, and content requirements. Evaluation is performed using the official IFBench scoring script, which checks constraint satisfaction.

\paragraph{Arena-Hard.}
Following the official Arena-Hard evaluation protocol~\cite{arenahard}, we use the benchmark's GPT-4-based judge to compare model responses against a baseline (o3-mini-2025-01-31) on 580 challenging questions from diverse domains. We report the win rate percentage as the final score.

\subsection{TTFT Measurement Protocol}

Time-to-first-token (TTFT) is measured specifically for tool-calling benchmarks ($\tau$-bench variants) where latency is most critical. The measurement protocol:

\begin{itemize}
    \item \textbf{Definition:} TTFT is the time from sending the request to receiving the first \textbf{user-visible} token of the model's response.

    \item \textbf{Aggregation:} For each row in Table~\ref{tab:main_results}, we report the mean TTFT over all available $\tau$-bench domain measurements for that configuration.

    \item \textbf{Conditions:} All TTFT measurements use concurrency=16 on H200 GPUs, with no other concurrent workloads.

    \item \textbf{Scope:} TTFT is not reported for reasoning benchmarks (AIME, GPQA, HMMT) or instruction-following/alignment benchmarks (IFBench, Arena-Hard) as these are primarily accuracy-focused rather than latency-sensitive.
\end{itemize}

\subsection{Evaluation Uncertainty and Training-Seed Robustness}
\label{app:uncertainty}

We additionally evaluate the statistical variability of ARC under both repeated evaluation and independent training seeds. These analyses separate variability arising from stochastic evaluation from variability arising from optimization randomness during training.

\begin{table*}[t]
\centering
\caption{Evaluation uncertainty on the tool-use benchmarks. Results are reported as mean $\pm$ standard deviation over $N=3$ independent evaluation runs using the same trained checkpoint. Tool-use Avg. is the mean over the five displayed $\tau$-bench and $\tau^2$-bench metrics.}
\label{tab:eval_uncertainty}
\scalebox{0.8}{
\begin{tabular}{lcccccc}
\toprule
\textbf{Method}
& \textbf{Tool-use Avg.}
& \textbf{$\tau$-Airline}
& \textbf{$\tau$-Retail}
& \textbf{$\tau^2$-Airline}
& \textbf{$\tau^2$-Retail}
& \textbf{$\tau^2$-Telecom} \\
\midrule
Qwen3-8B-NoThink
& $22.16 \pm 2.37$
& $12.00 \pm 0.00$
& $29.86 \pm 5.31$
& $14.61 \pm 7.79$
& $36.55 \pm 1.01$
& $17.80 \pm 2.53$ \\

Qwen3-8B-Think
& $31.35 \pm 0.19$
& $28.00 \pm 6.00$
& $36.81 \pm 4.29$
& $29.75 \pm 3.27$
& $38.71 \pm 1.06$
& $23.46 \pm 4.19$ \\
\midrule
PPO
& $33.60 \pm 1.34$
& $35.33 \pm 3.06$
& $41.45 \pm 3.92$
& $33.33 \pm 3.06$
& $38.89 \pm 2.21$
& $19.01 \pm 1.34$ \\

\textbf{PPO + ARC}
& $\mathbf{38.45 \pm 0.40}$
& $\mathbf{39.33 \pm 2.31}$
& $\mathbf{46.09 \pm 1.74}$
& $\mathbf{41.61 \pm 0.68}$
& $\mathbf{44.44 \pm 3.65}$
& $\mathbf{20.76 \pm 2.21}$ \\

DAPO
& $34.09 \pm 3.26$
& $31.33 \pm 4.16$
& $42.32 \pm 5.52$
& $35.33 \pm 7.57$
& $42.11 \pm 4.88$
& $19.37 \pm 3.40$ \\

\textbf{DAPO + ARC}
& $\mathbf{34.24 \pm 1.89}$
& $\mathbf{34.67 \pm 2.31}$
& $\mathbf{41.74 \pm 3.98}$
& $\mathbf{34.50 \pm 3.97}$
& $\mathbf{40.64 \pm 7.04}$
& $\mathbf{19.64 \pm 2.79}$ \\

GRPO
& $33.35 \pm 3.04$
& $31.33 \pm 4.62$
& $40.29 \pm 4.94$
& $36.67 \pm 6.43$
& $40.64 \pm 5.97$
& $17.84 \pm 2.82$ \\

\textbf{GRPO + ARC}
& $\mathbf{41.73 \pm 2.11}$
& $\mathbf{44.00 \pm 3.06}$
& $\mathbf{50.00 \pm 1.33}$
& $\mathbf{48.00 \pm 6.11}$
& $\mathbf{45.61 \pm 0.86}$
& $\mathbf{21.05 \pm 0.91}$ \\
\bottomrule
\end{tabular}
}
\end{table*}

\begin{table*}[t]
\centering
\caption{Robustness to training randomness. Results are reported as mean $\pm$ standard deviation over three independently trained models with different training seeds. Each seed-level score is obtained by averaging repeated evaluation runs for that checkpoint.}
\label{tab:seed_robustness}
\resizebox{0.8\textwidth}{!}{
\begin{tabular}{lcccccc}
\toprule
\textbf{Method}
& \textbf{Tool-use Avg.}
& \textbf{$\tau$-Airline}
& \textbf{$\tau$-Retail}
& \textbf{$\tau^2$-Airline}
& \textbf{$\tau^2$-Retail}
& \textbf{$\tau^2$-Telecom} \\
\midrule

PPO
& $31.81 \pm 1.87$
& $33.78 \pm 3.29$
& $36.13 \pm 5.09$
& $32.89 \pm 3.36$
& $37.14 \pm 1.92$
& $19.10 \pm 0.17$ \\

\textbf{PPO + ARC}
& $\mathbf{36.76 \pm 1.56}$
& $\mathbf{37.78 \pm 1.39}$
& $\mathbf{44.83 \pm 2.96}$
& $\mathbf{37.28 \pm 3.78}$
& $\mathbf{43.84 \pm 0.59}$
& $\mathbf{20.08 \pm 1.18}$ \\

DAPO
& $34.69 \pm 0.52$
& $35.55 \pm 4.02$
& $40.97 \pm 1.31$
& $35.56 \pm 1.02$
& $41.91 \pm 0.34$
& $19.44 \pm 0.46$ \\

\textbf{DAPO + ARC}
& $\mathbf{34.77 \pm 0.70}$
& $\mathbf{32.89 \pm 1.68}$
& $\mathbf{42.22 \pm 3.36}$
& $\mathbf{37.50 \pm 2.62}$
& $\mathbf{41.13 \pm 0.85}$
& $\mathbf{20.11 \pm 0.58}$ \\

GRPO
& $34.74 \pm 1.35$
& $32.67 \pm 1.34$
& $42.61 \pm 2.03$
& $38.20 \pm 1.37$
& $41.80 \pm 3.36$
& $18.44 \pm 0.80$ \\

\textbf{GRPO + ARC}
& $\mathbf{37.53 \pm 1.53}$
& $\mathbf{39.11 \pm 1.68}$
& $\mathbf{43.87 \pm 4.36}$
& $\mathbf{40.19 \pm 1.65}$
& $\mathbf{43.84 \pm 1.60}$
& $\mathbf{20.67 \pm 1.53}$ \\

\bottomrule
\end{tabular}
}
\end{table*}

\paragraph{Evaluation uncertainty.}
Table~\ref{tab:eval_uncertainty} reports variability across repeated evaluations of the same trained checkpoint. For the main $\tau$-bench and $\tau^2$-bench results, we perform three independent evaluation runs for each model checkpoint and report the mean and standard deviation across runs. In particular, ARC increases the tool-use average from 33.60 to 38.45 for PPO and from 33.35 to 41.73 for GRPO. The corresponding DAPO results are substantially closer, indicating that the effect of ARC depends on the underlying RL backbone.

\paragraph{Robustness to training randomness.}
Table~\ref{tab:seed_robustness} evaluates sensitivity to optimization randomness across three independent training seeds. Each resulting checkpoint is evaluated using the same repeated-evaluation protocol. We first average the repeated evaluations within each seed and then report the mean and standard deviation across the three seed-level scores.

The improvements are consistent across training seeds for PPO and GRPO. PPO improves from 31.81 to 36.76 on the tool-use average, while GRPO improves from 34.74 to 37.53. In contrast, DAPO remains essentially unchanged (34.69 versus 34.77), and we therefore do not interpret the DAPO result as evidence of a meaningful tool-use improvement.

\section{Training Details}
\label{app:training}

\subsection{Hyperparameters}

Table~\ref{tab:hyperparams} details RL training hyperparameters.

\begin{table}[t]
\centering
\caption{RL hyperparameters for the main experiments.}
\label{tab:hyperparams}
\scalebox{0.9}{
\begin{tabular}{ll}
\toprule
\textbf{Parameter} & \textbf{Value} \\
\midrule
Base Model & Qwen3-8B \\
Learning Rate & 1e-6 \\
Batch Size & 64 \\
PPO Mini Batch Size & 8 \\
PPO Micro Batch Size (per GPU) & 4 \\
Max Prompt Length & 6000 \\
Max Response Length & 1024 \\
Training Epochs & 1 \\
KL Loss Coefficient & 0.001 \\
KL Loss Type & low\_var\_kl \\
Use KL in Reward & False \\
GRPO Groups (n) & 8 \\
Entropy Coefficient & 0.001 \\
\bottomrule
\end{tabular}
}
\end{table}

\subsection{Compute Resources}
All experiments were conducted on NVIDIA H200 GPUs, using NVIDIA driver version 570.158.01 and CUDA 13.0.

\end{document}